\documentclass[runningheads]{llncs}

\usepackage{eccv}

\usepackage{eccvabbrv}

\usepackage{graphicx}
\usepackage{booktabs}

\usepackage[accsupp]{axessibility}  

\usepackage{hyperref}

\usepackage{orcidlink}

\usepackage{amsmath}
\usepackage{amssymb}
\usepackage{gensymb}
\usepackage[misc]{ifsym}

\usepackage[dvipsnames,table]{xcolor}
\usepackage{makecell}
\usepackage{multirow}
\usepackage{booktabs}
\usepackage{colortbl}
\usepackage{wrapfig}

\usepackage{twemojis}

\usepackage{float}

\usepackage{algorithm}
\usepackage{algpseudocode}
\usepackage{enumitem}

\usepackage{xspace}
\newcommand{\mymethod}{\texttt{CAESAR}\xspace}
\newcommand{\mymethodvivid}{\includegraphics[width=0.3cm]{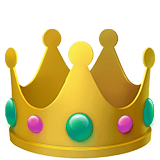}\hspace{0.05cm}\mymethod}

\newcommand{\matcherfull}{Dual-Cue Guided Re-Matching\xspace}
\newcommand{\matcherfullb}{\textbf{D}ual-\textbf{C}ue \textbf{G}uided \textbf{R}e-Matching\xspace}
\newcommand{\matcher}{DCRM\xspace}

\newcommand{\labelerfull}{Semantic-Geometric Label Mining\xspace}
\newcommand{\labelerfullb}{\textbf{S}emantic-\textbf{G}eometric \textbf{L}abel \textbf{M}ining\xspace}
\newcommand{\labeler}{SGLM\xspace}

\newcommand{\adapterfull}{Semantic-aware Teacher Refining\xspace}

\newcommand{\adapter}{SATR\xspace}

\newcommand{\plfilterfull}{Reliable Label Selection\xspace}

\newcommand{\plfilter}{RLS\xspace}

\newcommand{\dstllossfull}{Semantic Predictive Distillation\xspace}
\newcommand{\dstllossfullb}{\textbf{S}emantic \textbf{P}redictive \textbf{D}istillation\xspace}
\newcommand{\dstlloss}{SPD\xspace}

\newcommand\best[1]{\textbf{#1}}

\newcommand{\kitti}{KITTI\xspace}
\newcommand{\nuscenes}{nuScenes\xspace}

\newcommand{\leadingtriangle}{$\blacktriangleright$\xspace}
\newcommand{\leadingtriangledown}{$\blacktriangledown$\xspace}

\DeclareMathOperator{\sg}{sg}

\begin{document}

\title{Unsupervised Point Cloud Registration via Training-Time Semantic Guidance}

\titlerunning{\mymethodvivid}

\author{Kezheng Xiong\inst{1,2,*}\orcidlink{https://orcid.org/0009-0007-9368-8201} \and
Shiyun Xu\inst{1,2,*}\orcidlink{https://orcid.org/0009-0007-7224-2163} \and
Sheng Ao\inst{1,2}\orcidlink{https://orcid.org/0000-0001-6896-1869} \and
Siqi Shen\inst{1,2}\orcidlink{https://orcid.org/0000-0001-7479-0189} \and
Cheng Wang\inst{1,2}\orcidlink{https://orcid.org/0000-0001-6075-796X} \and
Chenglu Wen\inst{1,2,\textrm{\Letter}}\orcidlink{https://orcid.org/0000-0002-6189-1236}}

\authorrunning{K.~Xiong, S.~Xu et al.}

\institute{Key Laboratory of Multimedia Trusted Perception and Efficient Computing, \\
Ministry of Education of China, Xiamen University, China \and
Fujian Key Laboratory of Urban Intelligent Sensing and Computing, \\
Xiamen University, China\\
\email{\{xiongkezheng, shiyunxu\}@stu.xmu.edu.cn},\\
\email{\{aosh, siqishen, cwang, clwen\}@xmu.edu.cn}
}

\maketitle
\makeatletter
\def\blfootnote{\xdef\@thefnmark{}\@footnotetext}
\makeatother
\blfootnote{\textsuperscript{\textrm{\Letter}} Corresponding author.}
\blfootnote{$^{*}$ Equal contribution.}
\begin{center}
\includegraphics[width=0.98\textwidth]{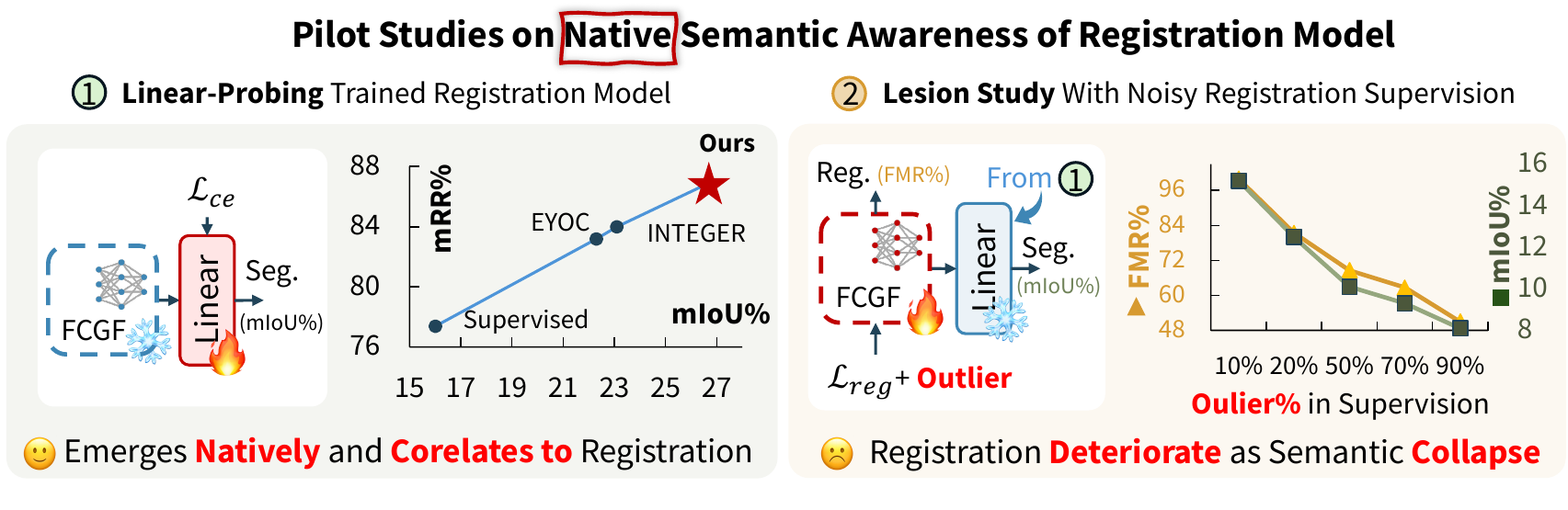}
\captionof{figure}{
    \textbf{Motivation}: Through linear probing on SemanticKITTI\cite{behley2019semantickitti}, we find that a \emph{fragile yet potent} semantic awareness \emph{emerges natively} in registration networks even without explicit semantic supervision, and it correlates with registration accuracy. 
}
\label{fig:motivation}
\end{center}

\begin{abstract}
Unsupervised registration of large-scale LiDAR point clouds remains challenging due to the geometric ambiguity inherent in outdoor scenes, which degrades pseudo-label quality and leads to suboptimal convergence, particularly for sparse, low-resolution scans such as those from \nuscenes.
We reveal that registration models \textbf{intrinsically} encode semantic awareness that strongly correlates with registration accuracy, albeit without explicit semantic supervision. However, this native awareness is fragile: noisy supervision arising from geometric ambiguity in unsupervised settings rapidly erodes the learned semantic structure, causing performance collapse.
To this end, we propose \mymethodvivid, a teacher-student framework guided by an off-the-shelf 3D segmentation model \emph{exclusively during training}. 
We observe that potential inlier matches are often buried just beneath a few spurious neighbors in the noisy feature space, motivating \matcherfull to recover them through reselection rather than simply rejecting.
Building on this, a train-only \labelerfull performs lightweight, batch-specific teacher refinement and mines reliable pseudo-labels under semantic guidance. We further introduce \dstllossfull to consolidate the student's semantic awareness in the feature space.
Extensive experiments on \kitti{} and \nuscenes{} demonstrate state-of-the-art performance, with pronounced gains on the challenging \nuscenes benchmark.
Crucially, \mymethod incurs \textbf{zero inference overhead} and requires \textbf{no semantic annotations} on the registration data.
Code will be released.
\keywords{Unsupervised registration \and Semantic information \and LiDAR data processing}
\end{abstract}
    
\section{Introduction}
\label{sec:introduction}
\begin{wrapfigure}{r}{0.52\textwidth}
	\centering
	\includegraphics[width=\linewidth]{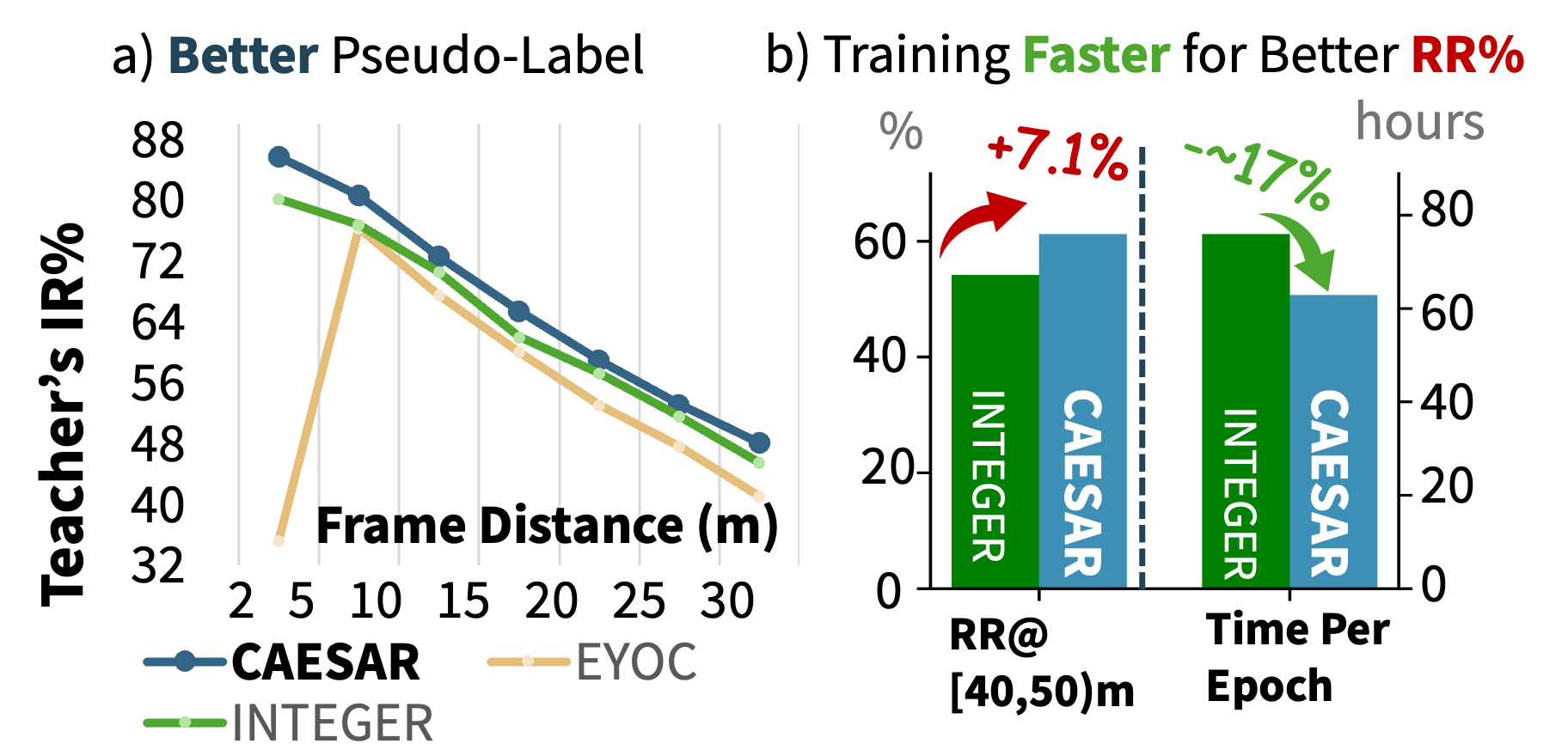}
	\caption{\textbf{Performance:} Our method leverages this native ability to improve unsupervised point cloud registration even with reduced training time (\cref{sec:analyze}).
	}
	\label{fig:performance}
\end{wrapfigure}

Point cloud registration is fundamental to 3D perception in robotics and autonomous driving, with broad applications in downstream tasks such as SLAM and scene reconstruction~\cite{zhu2022nice,jiang2023se}. While learning-based approaches achieve impressive accuracy, they require large, meticulously annotated datasets that are costly to obtain. Unsupervised methods~\cite{shen2022reliable,liu2024extend,xiong2024mining} offer compelling alternatives by generating correspondence pseudo-labels from internal consistency cues, thereby eliminating the need for pose annotations.

However, the geometric ambiguity inherent in outdoor scenes (\eg, repetitive building facades, featureless roads) degrades pseudo-label reliability in existing frameworks. Self-distillation methods~\cite{liu2024extend}, even those that mine latent contextual cues~\cite{xiong2024mining}, suffer from a \emph{self-referential loop}. Confirmation bias from noisy pseudo-labels gradually corrupts the teacher, which in turn degrades student supervision. This loop is particularly damaging for sparse, low-resolution LiDAR data (\eg, \nuscenes), where geometric cues alone are insufficient. Since internally derived signals fail to prevent collapse on their own, stable \emph{external guidance} is needed to break this loop and preserve feature-space integrity.

Semantic information is a natural candidate for such guidance, having long been recognized for resolving geometric ambiguity~\cite{yu2023peal,fung2024semreg,yan2024ml}. However, existing approaches predominantly operate in fully supervised settings~\cite{yu2023peal,fung2024semreg,yan2024ml}, leaving the unsupervised regime largely unexplored. Moreover, they exhibit notable practical limitations. Their dependence on \textbf{\emph{inference-time semantic segmentation models}} incurs substantial computational overhead and hinders generalization to domains where such models may be unavailable~\cite{xu2025s2reg,fung2024semreg}. Reliance on hard segmentation predictions for outlier rejection~\cite{yan2024ml} further induces false negatives and thus discard valuable correspondences, because they are \textbf{\emph{notoriously sensitive to class-boundary noise}} and deteriorate in out-of-domain scenes.

Beyond these practical limitations, we argue that existing approaches~\cite{xu2025s2reg,fung2024semreg,yan2024ml} conceptually mischaracterize the role of semantics, treating it as an auxiliary side-channel that remains permanently coupled to the test-time pipeline. Yet, semantic awareness is arguably an \emph{intrinsic} property of the registration model itself. To establish reliable correspondences, a network must implicitly learn to recognize coherent structures such as buildings or vegetation.
If this hypothesis holds, it unlocks a highly efficient paradigm, in which we could effectively preserve the network's innate semantic structure during training, thereby achieving superior performance with \emph{strictly zero} inference overhead.

Driven by this potential, we conduct pilot studies (see \cref{fig:motivation}). Through linear probing, we confirm that registration networks indeed exhibit \emph{native semantic awareness} that correlates well with \emph{registration} performance.
However, the lesion analysis reveals a critical fragility. Even moderate noise in supervision disrupts both geometric alignment and the latent semantic structure. We term this failure mode \emph{semantic collapse}, a phenomenon that is particularly devastating under the inherently noisy supervision of unsupervised registration.

To this end, we introduce \mymethodvivid to \textbf{C}rystallize and consolid\textbf{A}te such nativ\textbf{E} \textbf{S}emantic \textbf{A}wa\textbf{R}eness, a teacher-student framework that leverages a frozen, pretrained 3D segmentation model \emph{exclusively during training} to provide stable semantic anchors.
As a result, it prevents semantic collapse without incurring any inference cost and even \emph{reduces} training time (\cref{fig:performance}).
To mine reliable pseudo-labels with the teacher,
we observe that correct correspondences are often buried just beneath a few spurious feature-space neighbors, motivating \matcherfullb (\matcher) to recover them via reselection from candidate sets. 
Building on this, we propose the \emph{train-only} \labelerfullb (\labeler), which efficiently refines the teacher's feature space specifically for each mini-batch by optimizing only a compact parameter set, and then mines reliable pseudo-labels under semantic guidance.
To train the student, we introduce \dstllossfullb (\dstlloss), which consolidates the student's semantic awareness by encouraging semantic understanding in the feature space.

Extensive experiments on \kitti and \nuscenes demonstrate that \mymethod achieves state-of-the-art results, with especially notable improvements on the challenging \nuscenes benchmark. 
Crucially, the framework incurs \emph{\textbf{zero inference overhead}}, requiring no additional parameters or computation at test time. Our contributions are summarized as follows:
\begin{itemize}
\item We identify \emph{semantic collapse} as a key failure mode in unsupervised registration and are the first to propose preventing it via semantic anchoring using a frozen segmentation model during training.
\item We present \mymethod, introducing \matcher for recovering correct correspondences via reselection, \labeler for efficient batch-specific teacher refinement and reliable pseudo-label mining, and \dstlloss for consolidating the student's semantic awareness.
\item We achieve new state-of-the-art results on \kitti and \nuscenes with zero extra inference cost and reduced training time, while demonstrating robustness across diverse segmentation model choices.
\end{itemize}
\section{Related Work}

\subsubsection{Point Cloud Registration}
Learning-based methods typically follow two paradigms. 
\emph{Direct regression}\cite{xu2021omnet,aoki2019pointnetlk,huang2020feature} from global descriptors is efficient but can degrade under low overlap, 
while \emph{correspondence-based} methods\cite{choy2019fully,deng2018ppf,gojcic2019perfect,qin2022geometric,yew2022regtr,yu2023peal,xiong2024speal} establish point-level correspondences before pose estimation, offering greater robustness.
Robust estimation under heavy outlier ratios is another critical direction. 
Learned estimators\cite{bai2021pointdsc,choy2020deep,lee2021deep} surpass RANSAC\cite{fischler1981random} but require supervision; 
recent non-parametric methods exploit spatial compatibility\cite{chen2022sc2} or graph structures\cite{zhang20233d,zhang2024fastmac} for efficient, training-free inlier selection.

\subsubsection{Unsupervised Registration}
The high cost of pose annotations motivates unsupervised methods. Early attempts use Gaussian mixture models\cite{mei2023unsupervised,huang2022unsupervised}, neighborhood consensus\cite{shen2022reliable}, or ICP-based reward proxies\cite{jiang2021sampling}, yet often struggle with the sparsity and low overlap characteristic of outdoor LiDAR data. Teacher-student frameworks\cite{yang2021sgp} reduce pseudo-label noise via spatial filtering\cite{liu2024extend} and are further improved by contextual mining\cite{xiong2024mining}, but still rely solely on internal cues from the registration model, causing quality degradation when geometric ambiguity intensifies. We address this by harnessing the model's intrinsic semantic awareness through external semantic anchoring.

\subsubsection{Semantic Information in Registration}
Prior works leverage semantic cues for correspondence matching~\cite{yan2024ml}, outlier rejection~\cite{zhao2024sgor}, or dynamic-object suppression~\cite{cui2024sage}. Outlier rejection\cite{zhao2024sgor} using discrete, hard class predictions is brittle at category boundaries. Feature-fusion approaches~\cite{xu2025s2reg,fung2024semreg} combine semantic and geometric descriptors but require frozen segmentation models at inference or 2D--3D projection pipelines, adding considerable overhead. In contrast, we show that registration models \emph{intrinsically} develop semantic awareness, a capability that is potent yet fragile under noisy pseudo-labels. We thus propose to stabilize and strengthen this inherent awareness using external guidance \emph{only during training}, eliminating inference-time cost while achieving superior performance.

\begin{figure*}[t]
    \centering
    \includegraphics[width=0.98\linewidth]{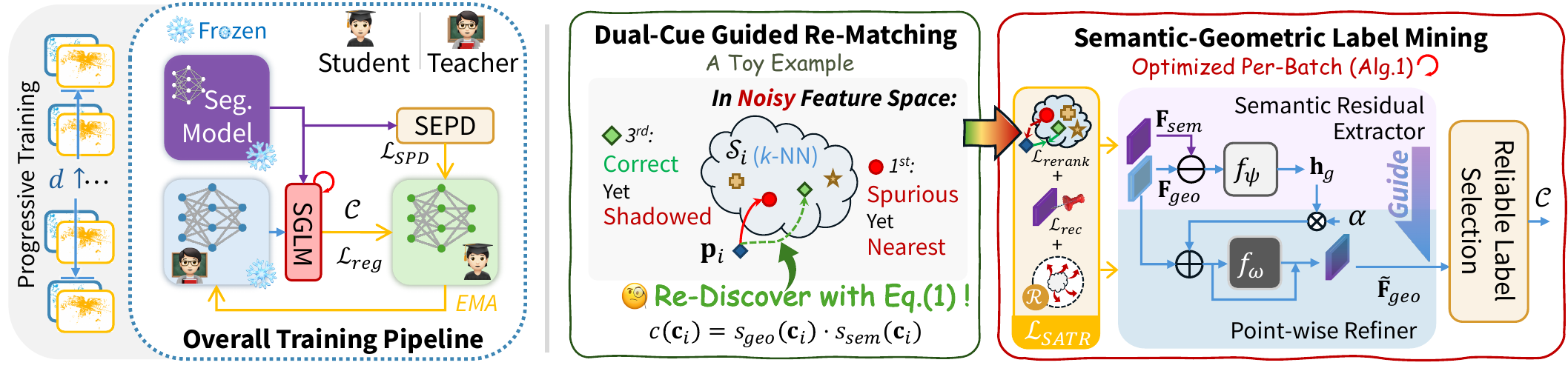}
    \caption{\textbf{Overall Pipeline.} 
    \matcher recovers correct correspondences obscured by feature-space noise by jointly considering geometric saliency and semantic similarity.
    The \emph{train-only} \labeler refines the teacher's feature space and mines high-quality pseudo-labels. 
    The student is trained with its original registration loss on \(\mathcal{C}\) and the \dstlloss objective to strengthen semantic awareness.
    }
    \label{fig:pipeline}
\end{figure*}

\section{Method}

\paragraph{Problem Formulation}
Given two LiDAR point clouds \(\mathcal{P} = \{\mathbf{p}_i\} \in \mathbb{R}^{m \times 3}\) and \(\mathcal{Q} = \{\mathbf{q}_j\} \in \mathbb{R}^{n \times 3}\), point cloud registration aims to estimate the rigid transformation \(\mathbf{T} = \{\mathbf{R}, \mathbf{t}\}\) with \(\mathbf{R} \in \mathrm{SO}(3)\) and \(\mathbf{t} \in \mathbb{R}^{3}\) that aligns \(\mathcal{P}\) to \(\mathcal{Q}\).

\paragraph{Overall Pipeline}
As shown in \cref{fig:pipeline}, \mymethod semantically anchors the teacher-student framework with a frozen, pretrained 3D segmentation model \(\mathcal{F}_{\text{sem}}\)\cite{lai2023spherical,choy20194d}.
Given a point cloud pair \((\mathcal{P}, \mathcal{Q})\), the teacher extracts geometric features \((\mathbf{F}^\mathcal{P}_{\text{geo}}, \mathbf{F}^\mathcal{Q}_{\text{geo}})\), while \(\mathcal{F}_{\text{sem}}\) provides semantic embeddings \((\mathbf{F}^\mathcal{P}_{\text{sem}}, \mathbf{F}^\mathcal{Q}_{\text{sem}})\) and per-point class logits \((\mathbf{z}^\mathcal{P}, \mathbf{z}^\mathcal{Q})\).

For the teacher, \matcher recovers correct correspondences obscured by a few spurious nearest neighbors in the noisy feature space, jointly leveraging geometric saliency and semantic consistency.
Building on this, \labeler performs lightweight, batch-specific teacher refinement, optimizing only a compact parameter set to yield semantically anchored features \((\tilde{\mathbf{F}}^\mathcal{P}_{\text{geo}}, \tilde{\mathbf{F}}^\mathcal{Q}_{\text{geo}})\). It then mines high-fidelity pseudo-labels \(\mathcal{C}\) by jointly enforcing geometric and semantic consistency.
The student model is trained with its original registration loss \(\mathcal{L}_{\text{reg}}\) (\eg, Hardest Contrastive Loss\cite{choy2019fully}) on \(\mathcal{C}\), while a novel \dstlloss strengthens the student's semantic awareness by encouraging contextual understanding rather than point-wise mimicry.
Following \cite{liu2024extend}, we adopt progressive training to gradually extend from adjacent to distant frame pairs.

Crucially, \(\mathcal{F}_{\text{sem}}\) is employed \emph{solely at training time}, incurring zero inference overhead and requiring no semantic annotations on the registration data. The framework is robust to semantic model quality, enabling training even when off-the-shelf models are imperfect or trained on other domains (see \cref{sec:analyze}).

\paragraph{Progressive Training} gradually adapts the student to increasingly distant point cloud pairs~\cite{liu2024extend,xiong2024mining}.
For each pair, the frame interval \(I\!\!\in\!\!\{1,\ldots, I_{\max}\}\) is stochastically determined, where \(I_{\max}\) increases linearly from 1 to a predefined maximum \(I_{\text{end}}\) over the course of training epochs. 
\(I\! =\!1\) represents easy adjacent frames, while larger \(I\) corresponds to low-overlap (\eg, \(\le\)40\%) and challenging pairs.

\subsection{\matcherfull}

\begin{wrapfigure}{r}{0.5\textwidth}
    \centering
    \includegraphics[width=\linewidth]{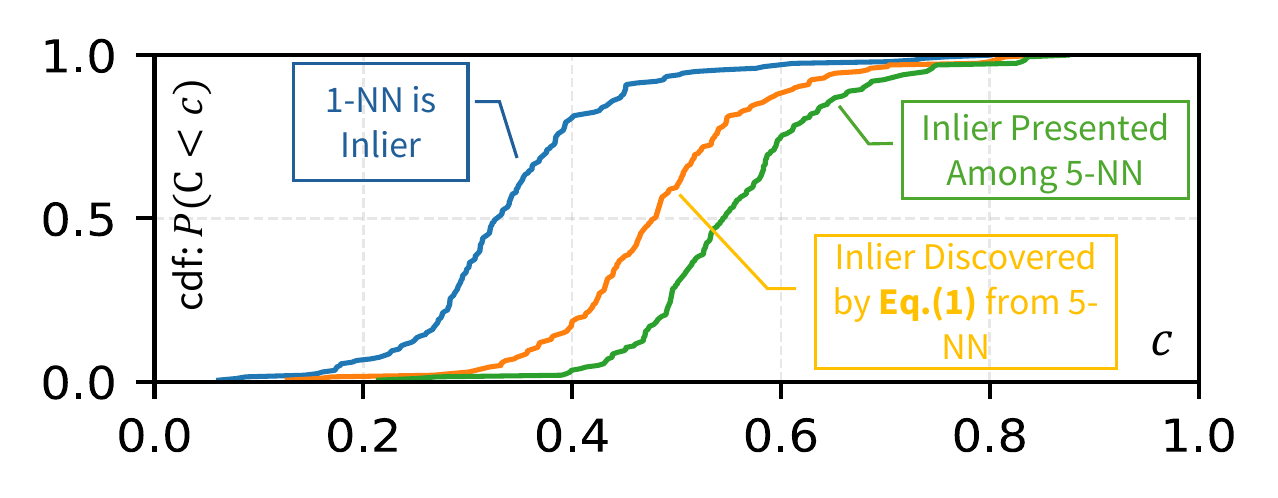}
    \caption{\textbf{Cumulative Distribution Function (cdf) of Pairs from \kitti.} 
    Variable \(C\) refers to the \emph{ratio of correct candidate sets} under certain criteria. 
    During early training, 
    over 80\% pairs have lower than 40\% correct NN matches, 
    whereas most pairs include twice as many correct matches within their top-5 candidate sets, demonstrating the potential of reselection.
    }
    \label{fig:topk_recall}
\end{wrapfigure}

\subsubsection{Motivation}
Existing methods match each point to its nearest neighbor (NN) in feature space\cite{liu2024extend,xiong2024mining,choy2019fully,huang2021predator} and then \emph{reject} suspected outliers via filtering\cite{liu2024extend} or clustering\cite{xiong2024mining}.
This paradigm faces a fundamental dilemma: when noise dominates, filtering either fails due to vast geometric ambiguity, or discards most matches, thereby ``starving''\cite{chen2022sc2,chen2023sc} the registration process that requires adequate correspondences\cite{zhao2025progressive}.

We make a \emph{key observation} (\cref{fig:topk_recall}). Although the nearest-neighbor \emph{point} may be incorrect, the true match frequently still lies within the top-\(K\) candidate \emph{set}.
This motivates a paradigm shift \textbf{\emph{from rejection to reselection}} (see the toy example in \cref{fig:pipeline}~(Mid.)). Rather than discarding noisy matches\cite{liu2024extend,mei2023unsupervised}, \matcher selects the most plausible correspondence from each candidate set, thereby recovering ample high-quality matches even under severe ambiguity, which is a crucial prerequisite for accurate registration~\cite{bai2021pointdsc,chen2022sc2}.

\subsubsection{Correspondence Scoring}
\label{sec:sgcs}
To realize this selection strategy, \matcher scores each candidate correspondence by jointly requiring both \emph{geometric saliency} and \emph{semantic similarity}, respectively denoted as \(s_{\text{geo}}\) and \(s_{\text{sem}}\):
\begin{equation}
c(\mathbf{c}_i) = s_{\text{geo}}(\mathbf{c}_i) \cdot s_{\text{sem}}(\mathbf{c}_i),
\label{eq:score}
\end{equation}
The two terms are naturally complementary, as registration models struggle with repetitive geometric structures, whereas segmentation models are uncertain near class boundaries. By requiring high scores in \emph{both} aspects simultaneously, \cref{eq:score} mitigates each source's failure mode.

Specifically, the term \(s_{\text{geo}}\) rewards structural distinctiveness\cite{pauly2002efficient,avidar2016point}. From each point's spatial neighborhood \(\mathcal{N}_k(\mathbf{p}_i)\), the covariance eigenvalues \(\lambda_1 \!\le\! \lambda_2 \!\le\! \lambda_3\) yield a saliency ratio \(\rho^{\mathbf{p}_i} \!=\! \lambda_1/(\lambda_1\!+\!\lambda_2\!+\!\lambda_3)\), which peaks at non-planar regions such as edges and corners\cite{belton2008improving}. The pairwise geometric score is then:
\begin{equation}
s_{\text{geo}}(\mathbf{c}_i) \!=\! \rho^{\mathbf{p}_i} \!\cdot\! \rho^{\mathbf{q}_i}
\end{equation}
Meanwhile, rather than solely comparing hard class assignments~\cite{yan2024ml}, which are brittle near boundaries, \(s_{\text{sem}}\) uses a weighted Jensen-Shannon divergence (JSD)~\cite{fuglede2004jensen}:
\begin{equation}
s_{\text{sem}}(\mathbf{c}_i) = w_e\exp (-w_c\cdot\mathrm{JSD}(\mathbf{z}^{\mathbf{p}_i},\,\mathbf{z}^{\mathbf{q}_j})),
\label{eq:sem_score}
\end{equation}
where \(\mathbf{z}\) denotes \(C\)-class softmax probabilities from \(\mathcal{F}_{\text{sem}}\).
The confidence weight \(w_c=(H(\mathbf{z}^{\mathbf{p}_i})+H(\mathbf{z}^{\mathbf{q}_j}))/2\log C\) penalizes uncertain predictions\cite{tarubinga2025farcluss} via the entropy \(H(\cdot)\) of the class distribution\cite{shannon1948mathematical}.  
The embedding weight \(w_e=(1+\cos(\mathbf{F}^{\mathbf{p}_i}_{\text{sem}}, \mathbf{F}^{\mathbf{q}_i}_{\text{sem}}))/2\) captures holistic semantic context via the cosine similarity of \(\mathcal{F}_{\text{sem}}\)'s embeddings, further mitigating boundary noise and providing complementary cues for disambiguation.

\subsubsection{Constrained Correspondence Reselection}
\label{sec:reselection}
For each point \(\mathbf{p}_i\), we construct a candidate set from its \(k\)-NN in geometric feature space:
\begin{equation}
\mathcal{S}_i=\{\mathbf{s}_k\mid\mathbf{s}_k=(\mathbf{p}_i,\mathfrak{N}^k_\mathcal{Q}(\mathbf{p}_i))\}_{k=1}^{K},
\end{equation}
where \(\mathfrak{N}^k_\mathcal{Q}(\mathbf{p}_i)\) is the \(k\)-th NN of \(\mathbf{p}_i\) in \(\mathcal{Q}\).
We select the highest-scoring candidate with \(\mathbf{c}_i = \arg\max_{\mathbf{s}_k \in \mathcal{S}_i} c(\mathbf{s}_k)\).
Applying similarity thresholds\cite{liu2024extend,xiong2024mining}, we obtain the refined set \(\hat{\mathcal{C}}=\{\mathbf{c}_i\}\).
Crucially, \(\hat{\mathcal{C}}\) and its candidate sets \(\hat{\mathcal{S}}=\{\mathcal{S}_i\}\) serve as a \emph{feature-space quality diagnostic}. In a discriminative feature space, correct matches should rank near the top.
We formalize this via the \emph{average rank} (AR):
\begin{equation}
\begin{aligned}
\textstyle
\text{AR}(\mathcal{C}) = \frac{1}{|\mathcal{C}|} \sum_{(\mathbf{p}_i,\mathbf{q}_j) \in \mathcal{C}} \min_k \{k \mid \mathfrak{N}^k_\mathcal{Q}(\mathbf{p}_i)= \mathbf{q}_j \}.
\end{aligned}
\label{eq:ar}
\end{equation}
A low AR indicates a discriminative feature space where correct matches surface near the top of each candidate set; a high AR signals that true correspondences remain obscured by noise. Beyond its diagnostic role, AR also motivates an effective, differentiable training objective in \labeler (\cref{sec:adapter}).

\subsection{\labelerfull}
While \matcher recovers correspondences obscured by noise, it cannot compensate once true matches fall beyond reselection's reach as semantic collapse intensifies (\cref{sec:introduction}, \cref{fig:motivation}).
The \emph{train-only} \labeler addresses this root cause by stabilizing the feature space itself, thereby \emph{preserving} the teacher's latent semantic awareness.
It proceeds in two stages. \adapterfull first reinforces the teacher's features using stable semantic anchors from \(\mathcal{F}_{\text{sem}}\) by optimizing a compact parameter set specific to each batch, and \plfilterfull then selects reliable pseudo-labels from the refined correspondences.

\subsubsection{Stage 1: \adapterfull (\adapter)}
\label{sec:adapter}
When pseudo-label noise severely accumulates, reselection alone cannot reverse the ensuing semantic erosion. 
Existing efforts either freeze the teacher entirely~\cite{liu2024extend}, forfeiting adaptability to evolving noise, or adapt the full teacher per-batch~\cite{xiong2024mining}, incurring prohibitive cost and risking overfitting. 
With \(\mathcal{F}_{\text{sem}}\) as a stable reference, we instead design a concise train-only adapter to reinforce the eroding semantic structure in the feature space, thereby preventing collapse during training.
\adapter is far more efficient, since it optimizes a mere \emph{0.07\,M} parameters (vs.\ \emph{8.76\,M} adapted in previous state-of-the-art~\cite{choy2019fully}) while achieving better performance. 
The efficiency is ensured not only by the compact design, but also through the AR metric (\cref{eq:ar}), which directly exposes noise-induced degradation and can thus guide a refinement process to restore a discriminative feature space.

\paragraph{Architecture}
Semantic embeddings reside in a far higher-dimensional space (\(d_{\text{sem}}\!=\!512\)~\cite{choy20194d} vs.\ \(d_{\text{geo}}\!=\!32\)~\cite{choy2019fully}). Na\"ive fusion or concatenation~\cite{kong2020semantic,liu2024deep,fung2024semreg,xu2025s2reg} risks diluting geometric details or conflating geometrically distinct points of the same class (\eg, separate trees/vehicles).
Therefore, \adapter is designed to extract a compact \emph{semantic residual} containing the absent, discriminative information present in \(\mathcal{F}_{\text{sem}}\).
Targeting the ambiguity revealed by \matcher (\cref{eq:ar}) within candidate sets, its parameters are freshly initialized for each batch and optimized \emph{locally} in a regularized way (\cref{eq:explore,eq:rerank}). This controlled and adaptive refinement enables \mymethod to remain effective even with imperfect semantics (see \cref{sec:analyze}).

Specifically, the \adapter comprises a \emph{residual extractor} \(\Phi_g\) and a \emph{point-wise refiner} \(\Phi_r\), connected via a learnable gate \(\alpha\).
\(\Phi_g\) projects geometric features \(\mathbf{F}^\mathcal{P}_{\text{geo}} \!\in\! \mathbb{R}^{N \times d_{\text{geo}}}\) and semantic embeddings \(\mathbf{F}^\mathcal{P}_{\text{sem}} \!\in\! \mathbb{R}^{N \times d_{\text{sem}}}\) into a shared latent space and isolates discriminative semantic signals absent from the geometric representation through a sparse convolutional block~\cite{choy2019fully} \(f_\psi\):
\begin{equation}
\mathbf{h}^{\mathcal{P}}_g =f_\psi(\mathbf{F}^\mathcal{P}_{\text{sem}}\mathbf{W}_{\text{sem}} - \mathbf{F}^\mathcal{P}_{\text{geo}}\mathbf{W}_{\text{geo}})\mathbf{W}_{\text{out}}.
\label{eq:high}
\end{equation}
To keep these residuals faithful to the semantic source, \(\Phi_g\) is trained to recover the original semantic embeddings via an auxiliary projection \(f_\phi\):
\begin{equation}
\mathcal{L}^\mathcal{P}_{\text{rec}}=\|\mathbf{F}^\mathcal{P}_{\text{sem}}-f_\phi(\mathbf{F}^{\mathcal{P}}_{\text{geo}} + \sg(\tanh(\alpha)) \cdot \mathbf{h}^{\mathcal{P}}_g)\|_2^2;
\label{eq:recon_loss}
\end{equation}
Owing to the complementary stop-gradient placement (cf.\ \cref{eq:low}), \(\mathcal{L}_{\text{rec}}\) propagates gradients only through \(\Phi_g\). The same applies to \(\mathcal{Q}\), giving \(\mathcal{L}_{\text{rec}} = \mathcal{L}^\mathcal{P}_{\text{rec}} + \mathcal{L}^\mathcal{Q}_{\text{rec}}\).
Next, \(\Phi_r\) incorporates the residuals into the geometric space via the learnable gate \(\alpha\) and a lightweight MLP \(f_\omega\):
\begin{equation}
\mathbf{h}^{\mathcal{P}}_r \! =\! \mathbf{F}^{\mathcal{P}}_{\text{geo}} + \tanh(\alpha) \cdot \sg(\mathbf{h}^{\mathcal{P}}_g), \quad 
\tilde{\mathbf{F}}^{\mathcal{P}}_{\text{geo}}\! =\! \mathbf{h}^{\mathcal{P}}_r +  f_\omega(\mathbf{h}^{\mathcal{P}}_r),
\label{eq:low}
\end{equation}
where \(\sg(\cdot)\) blocks gradients from \(\Phi_r\) to \(\Phi_g\).
Together with the \(\sg(\cdot)\) placement in \cref{eq:recon_loss}, these stop-gradients realize \emph{alternating optimization}, in which each loss updates one module while implicitly treating the other as fixed, allowing \(\Phi_g\) and \(\Phi_r\) to specialize without conflicting gradients.

To sharpen feature discriminability with the incorporated semantic cues, we propose a contrastive re-ranking loss that exploits the candidate-set structure from \matcher, pulling the selected correspondence \(\mathbf{c}_i \in \hat{\mathcal{C}}\) closer while repelling the remaining \(K\!-\!1\) candidates~\cite{oord2018representation}.
This is achieved via a \emph{differentiable surrogate} of the AR diagnostic that smoothly relaxes \cref{eq:ar}:
\begin{equation}
    \label{eq:rerank}
    \textstyle
\mathcal{L}_{\text{rerank}} = \frac{1}{|\hat{\mathcal{C}}|} \sum_{\mathbf{c}_i \in \hat{\mathcal{C}}} \log \left( 1 + \sum_{\mathbf{q}_k \in \mathcal{S}_i \setminus \{\mathbf{q}_j\}} \exp \left( \frac{s(\mathbf{p}_i, \mathbf{q}_k) - s(\mathbf{p}_i, \mathbf{q}_j)}{\tau} \right) \right)
\end{equation}
Please refer to the supplementary material for full mathematical proof of this relaxation.
Here, \(s(\cdot, \cdot)\) denotes cosine similarity between refined features and \(\tau\) is a temperature. 
The same computation applies to \(\mathcal{Q}\).
Unlike standard contrastive registration losses\cite{choy2019fully,huang2021predator,qin2022geometric,zeng2025unlocking}, \(\mathcal{L}_{\text{rerank}}\) operates \emph{locally} within each \(\mathcal{S}_i\), confining the influence of erroneous selections to their immediate neighborhoods.

\paragraph{Volumetric Regularized Objective}
While \(\mathcal{L}_{\text{rerank}}\) and \(\mathcal{L}_{\text{rec}}\) ensure each component serves its intended role, one concern remains.
Incorporating high-dimensional semantic residuals into a compact \(d_{\text{geo}}\)-dimensional space risks diluting its volume\cite{yu2020learning}, neutralizing the geometric discriminative capacity upon which \mymethod relies for pseudo-labeling.
We prevent this with an exploration regularizer that maximizes the geometric volume of the feature distribution~\cite{yu2020learning,wu2025simplifying}:
\begin{equation}
\mathcal{R}^\mathcal{P} = -\gamma \frac{1}{2}\log\det(\mathbf{I} + \frac{1}{\epsilon^2}\Gamma^\mathcal{P}_{\text{batch}}),
\label{eq:explore}
\end{equation}
where \(\Gamma^\mathcal{P}_{\text{batch}} \in \mathbb{R}^{d \times d}\) is the batch-wise covariance matrix and \(\gamma, \epsilon\) control regularization strength.
Aggregating all terms yields the full optimization objective:
\begin{equation}
\label{eq:full_loss}
\mathcal{L}_{\text{\adapter}}=\underbrace{\mathcal{R}^\mathcal{P} + \mathcal{R}^\mathcal{Q}}_{\text{Exploration}} + \underbrace{\mathcal{L}_{\text{rerank}} + \mathcal{L}_{\text{rec}}}_{\text{Exploitation}}.
\end{equation}
This defines an effective exploration--exploitation loop. Volume maximization creates room for semantic residuals, re-ranking (\(\mathcal{L}_{\text{rerank}}\)) thereby sharpens local discrimination, and reconstruction \(\mathcal{L}_{\text{rec}}\) prevents the semantic guidance from drifting, keeping the adapted space both expressive and semantically grounded.

\paragraph{Batch-Specific Optimization}
With the objective in \cref{eq:full_loss}, the optimization of \adapter is driven by \cref{alg:satr}, freshly initialized for every mini-batch and optimized in an Expectation-Maximization (EM) procedure\cite{dempster1977maximum}.
We apply zero-initialization to \(f_\omega\)'s final layer and \(\alpha\), ensuring gradual, stable refinement from the original features.
In the E-step, \(\Phi_r\) and \(\Phi_g\) are frozen while \matcher mines correspondences \(\hat{\mathcal{C}}\) and their candidate sets \(\hat{\mathcal{S}}\) from the current refined features.
In the M-step, these quantities are held fixed and \(\Phi_r\), \(\Phi_g\) are updated by minimizing \(\mathcal{L}_{\text{\adapter}}\), with the stop-gradient decoupling (\cref{eq:recon_loss,eq:low}) implicitly alternating optimization between the two components, yielding a two-level optimization hierarchy.
The E/M alternation terminates once AR (\cref{eq:ar}) stabilizes, which serves as a natural convergence criterion, since AR is the metric to reflect feature-space ambiguities that \(\mathcal{L}_{\text{rerank}}\) smoothly surrogates.
A final \matcher pass then yields the refined proposals \(\tilde{\mathcal{C}}\) for \plfilterfull.
The trained modules are discarded after each batch, preventing confirmation bias from accumulating across batches.

Thresholds and other hyperparameters are detailed in the supplementary material. Empirically, \cref{alg:satr} converges stably and rapidly, typically within 3--5 iterations, yielding improved pseudo-labels while notably reducing training time compared to full model adaptation~\cite{xiong2024mining} (\cref{sec:analyze}).

\begin{algorithm}[t]
\caption{Batch-Specific Optimization for \adapterfull}
\label{alg:satr}
\begin{algorithmic}[1]
\Require Geometric/semantic features \(\mathbf{F}^{\mathcal{P}}_{\text{geo}},\mathbf{F}^{\mathcal{Q}}_{\text{geo}}\)/\(\mathbf{F}^{\mathcal{P}}_{\text{sem}},\mathbf{F}^{\mathcal{Q}}_{\text{sem}}\); logits \(\mathbf{z}^{\mathcal{P}},\mathbf{z}^{\mathcal{Q}}\)
\Ensure Refined proposals \(\tilde{\mathcal{C}}\)
\State Initialize trainable parameters from \(\Phi_r,\Phi_g\) \hfill \(\triangleright\) \emph{freshly initialized per batch}
\State Calculate \(\tilde{\mathbf{F}}^{\mathcal{P}}_{\text{geo}},\tilde{\mathbf{F}}^{\mathcal{Q}}_{\text{geo}}\) by \cref{eq:low,eq:high} on \(\mathbf{F}^{\mathcal{P}}_{\text{geo}},\mathbf{F}^{\mathcal{Q}}_{\text{geo}},\mathbf{F}^{\mathcal{P}}_{\text{sem}},\mathbf{F}^{\mathcal{Q}}_{\text{sem}}\)
\Repeat
    \State \textbf{E-step:} Fix \(\Phi_r,\Phi_g\); let \(\hat{\mathcal{C}},\hat{\mathcal{S}} \gets \text{\matcher}(\tilde{\mathbf{F}}^{\mathcal{P}}_{\text{geo}},\tilde{\mathbf{F}}^{\mathcal{Q}}_{\text{geo}})\)
    \State \textbf{M-step:} Fix \(\hat{\mathcal{C}},\hat{\mathcal{S}}\); update \(\Phi_r,\Phi_g\) by gradient descent via \(\nabla_{\Phi_r,\Phi_g}\mathcal{L}_{\text{\adapter}}\)
    \State Calculate \(\tilde{\mathbf{F}}^{\mathcal{P}}_{\text{geo}},\tilde{\mathbf{F}}^{\mathcal{Q}}_{\text{geo}}\) by \cref{eq:low,eq:high} on \(\mathbf{F}^{\mathcal{P}}_{\text{geo}},\mathbf{F}^{\mathcal{Q}}_{\text{geo}},\mathbf{F}^{\mathcal{P}}_{\text{sem}},\mathbf{F}^{\mathcal{Q}}_{\text{sem}}\)
\Until{\(\text{AR}(\hat{\mathcal{C}})\) difference converges within \(\epsilon\)}
\State \(\tilde{\mathcal{C}} \gets \text{\matcher}(\tilde{\mathbf{F}}^{\mathcal{P}}_{\text{geo}},\tilde{\mathbf{F}}^{\mathcal{Q}}_{\text{geo}})\) \hfill \(\triangleright\) \emph{final proposals for \plfilterfull}
\end{algorithmic}
\end{algorithm}

\subsubsection{Stage 2: \plfilterfull (\plfilter)}
With refined features from the teacher, this stage produces reliable pseudo-labels \(\mathcal{C}\) for student training.
Rather than relying on a single transformation estimated from noisy initial correspondences\cite{liu2024extend,xiong2024mining}, we generate \(K\) transformation hypotheses \(\{\mathbf{T}_k\}_{k=1}^K\) from the refined proposals \(\tilde{\mathcal{C}}\) via spatial compatibility measures\cite{chen2022sc2,zhang2024fastmac} and select the best by re-weighting each hypothesis's consensus set (\ie, sets of selected inliers\cite{chen2022sc2,chen2023sc}) \(\mathcal{C}_k\) with \(c(\cdot)\) defined in \cref{eq:score}:
\begin{equation}
\textstyle
k^* = \operatornamewithlimits{arg\,max}_{k}\left(\mathrm{IC}(\mathbf{T}_k) \cdot \frac{1}{|\mathcal{C}_k|}\sum_{\mathbf{c}_i \in \mathcal{C}_k} c(\mathbf{c}_i) \right).
\label{eq:wic}
\end{equation}
Here, \(\mathrm{IC}(\cdot)\) denotes the inlier count metric commonly used to evaluate transformation hypotheses\cite{fischler1981random,chen2022sc2,chen2023sc,zhang20233d,zhang2024fastmac}.
This ensures the selected transformation is grounded in correspondences that are both geometrically salient and semantically coherent.
Using \(\mathbf{T}_{k^*}\), we follow \cite{liu2024extend,xiong2024mining} to obtain \(\mathcal{C}^{'}_{k^*}\) via NN search. The final pseudo-labels are then selected as the top-\(M\) highest-scoring correspondences:
\begin{equation}
\mathcal{C} = \{\mathbf{c}_i \mid \mathbf{c}_i \in \mathcal{C}^{'}_{k^*}, i \in \mathrm{TopM}(\{c(\mathbf{c}_i)\})\}.
\label{eq:final_sel}
\end{equation}
This final selection step prevents \emph{systematically} incorrect pseudo-labels caused by inaccurate transformations that are more prone to memorize\cite{chapelle2006,vincent2010stacked}, thereby enhancing supervision quality and ultimately improving registration performance.

\subsection{Initialization and Training Pipeline}

\subsubsection{Teacher Initialization and Update}
Unlike prior methods that require additional synthetic pre-training~\cite{xiong2024mining} or assume identity transformations between adjacent frames as ground truth~\cite{liu2024extend} for initialization, the proposed approach of semantic anchoring enables reliable pseudo-labeling on adjacent frame pairs, even from a randomly initialized teacher.
In the first stage of progressive training~\cite{liu2024extend}, we therefore train the student without \dstlloss to avoid propagating feature-level noise, synchronizing the teacher with the student after each epoch.
In subsequent stages, the teacher is frozen and updated once per epoch via exponential moving average (EMA) of the student weights.

\subsubsection{\dstllossfull for Student}
Training the student with the registration loss \(\mathcal{L}_{\text{reg}}\) (\eg, Hardest Contrastive Loss\cite{choy2019fully}) on the pseudo-labels \(\mathcal{C}\) transfers geometric correspondence knowledge.
However, \(\mathcal{L}_{\text{reg}}\) carries no explicit signal about the semantic structure captured by \(\mathcal{F}_{\text{sem}}\).
Directly regressing high-dimensional semantic embeddings is suboptimal due to the feature-space gap (\(d_{\text{sem}}\!\gg\!d_{\text{geo}}\), cf.\ \cref{sec:adapter}), which collapses feature diversity and dilutes geometric details.
We instead formulate \dstlloss as contextual prediction~\cite{assran2023self,soh2025noise}, where the student is encouraged to \emph{reconstruct} a point's semantic embedding from its geometric context, thereby internalizing semantic reasoning.
Concretely, distillation targets are restricted to high-confidence points \(\tilde{\mathbf{p}}\) whose correspondences \(\mathfrak{C} = \{(\tilde{\mathbf{p}}_i, \tilde{\mathbf{q}}_j)\}\subseteq \mathcal{C}\) pass mutual nearest-neighbor verification in the teacher's refined feature space.
For each selected point, we stochastically drop \(r\) feature channels and add Gaussian noise \(\boldsymbol{\epsilon}\!\sim\!\mathcal{N}(0,\sigma^2)\)~\cite{wang2024glace} to the student features, encouraging robustness and preventing overfitting. The objective is defined as:
\begin{equation}
\textstyle
\mathcal{L}_{\text{\dstlloss}}^\mathcal{P} = \sum_{\tilde{\mathbf{p}}_i \in \tilde{\mathbf{p}}} c_i \cdot \mathrm{SmoothL1}(f_\theta(\mathbf{F}^{\{\mathcal{P} \setminus \tilde{\mathbf{p}}\}}_{\text{stu}}+\boldsymbol{\epsilon}), \mathbf{F}^{\tilde{\mathbf{p}}_i}_{\text{sem}}).
\label{eq:mask_predict}
\end{equation}
Here, \(f_\theta\) is a convolutional\cite{choy20194d} auxiliary decoder trained alongside the student to reconstruct the corresponding semantic embedding.
\(c_i\) weight predictions by estimated quality (\cref{eq:score}); Smooth L1~\cite{girshickICCV15fastrcnn} provides robustness to spurious targets.
The loss is computed symmetrically for \(\mathcal{Q}\), giving \(\mathcal{L}_{\text{\dstlloss}}\!=\!\mathcal{L}_{\text{\dstlloss}}^\mathcal{P}\!+\!\mathcal{L}_{\text{\dstlloss}}^\mathcal{Q}\).

\paragraph{Overall Student Objective}
The student model is trained end-to-end as a weighted sum of both losses, with \(\lambda_{\text{\dstlloss}}\) balancing the two terms:
\begin{equation}
    \mathcal{L}_{\text{total}} = \mathcal{L}_{\text{reg}} +  \lambda_{\text{\dstlloss}} \, \mathcal{L}_{\text{\dstlloss}}, 
\end{equation}
\section{Experiments}

Following existing unsupervised methods for outdoor LiDAR data\cite{liu2024extend,xiong2024mining}, we evaluate \mymethod on \kitti\cite{geiger2012we} and \nuscenes\cite{caesar2020nuscenes}. 
Implementation details are provided in the supplementary material.

\subsection{Setup}

\paragraph{Metrics}
Following \cite{qin2022geometric, zeng2025unlocking, huang2021predator}, we report \emph{Relative Rotation Error} (RRE), \emph{Relative Translation Error} (RTE), and \emph{Registration Recall} (RR) with thresholds \(\mathrm{RRE}\!<\!5\degree\!\wedge\! \mathrm{RTE}\!<\!2\mathrm{m}\). 
For distant scenarios, we report \(\mathrm{RR}@\left[d_1,d_2\right)\) for distance \(d\in\left[d_1,d_2\right)\) and \emph{mean RR} (mRR)\cite{liu2024extend,xiong2024mining}, defined as the average RR across all distance intervals.
We use the teacher's \emph{Inlier Ratio} at the first teacher-student training epoch (tIR@1st Epoch) to evaluate pseudo-label quality\cite{liu2024extend,xiong2024mining}.

\paragraph{Baselines}
We compare against supervised methods, including \emph{geometry-only} FCGF\cite{choy2019fully}, Predator\cite{huang2021predator}, SpinNet\cite{ao2021spinnet}, D3Feat\cite{bai2020d3feat}, CoFiNet\cite{yu2021cofinet}, and GeoTrans.\cite{qin2022geometric}, as well as \emph{semantic-aware} baselines S$^2$Reg\cite{xu2025s2reg} and ML-SemReg\cite{yan2024ml}, both of which require the segmentation model at inference.
Following EYOC\cite{liu2024extend}, we report a
variant of FCGF denoted as FCGF+C, which is FCGF trained with progressive training.
For unsupervised methods, we compare with RIENet\cite{shen2022reliable}, EYOC\cite{liu2024extend}, and INTEGER\cite{xiong2024mining}.
Following~\cite{liu2024extend,xiong2024mining}, we use the FCGF model with RANSAC for transformation estimation unless otherwise specified, ensuring a fair comparison.

\subsection{Comparison with State-of-the-Art}

\begin{table}[tb]
  \centering
  \setlength{\tabcolsep}{2.0pt}
  \renewcommand{\arraystretch}{0.9}
  \caption{
  \textbf{Comparisons with State-of-the-Art Methods.} 
  The best \emph{unsupervised} results are highlighted in \textbf{bold}. 
  ``\checkmark'' in the column ``U'' denotes the methods are \emph{\textbf{U}nsupervised}. 
  Notably, S$^2$Reg and ML-SemReg require semantic information at inference which \textcolor{CarnationPink}{adds computational overhead}, 
  while \mymethod uses semantics only during training, matching the inference \textcolor{YellowGreen}{efficiency} of purely geometric methods.
  }
  \scriptsize{
  \begin{tabular}{lllccccccc}
    \toprule
    \multirow{2}{*}{Dataset} & \multirow{2}{*}{\tiny{\shortstack[l]{Semantic \\ Availability \\ Requirement}}} & \multirow{2}{*}{Method} & \multirow{2}{*}{U} & \multirow{2}{*}{mRR} & \multicolumn{5}{c}{RR\(@ d\in\)}  \\
    \cmidrule{6-10}
    & & & & & \(\left[5, 10\right)\) & \(\left[10, 20\right)\) & \(\left[20, 30\right)\) & \(\left[30, 40\right)\) & \(\left[40, 50\right)\) \\
    \midrule
    \multirow{16}{*}{\kitti} & \multirow{9}{*}{N/A} & FCGF & -- & 77.4 & 98.4 & 95.3 & 86.8 & 69.7 & 36.9 \\
    & & FCGF+C & -- & 84.6 & 100.0 & 97.5 & 90.1 & 79.1 & 56.3 \\
    & & Predator & --  & 87.9 & 100.0 & 98.6 & 97.1 & 80.6 & 63.1 \\
    & & SpinNet & --  & 39.1 & 99.1 & 82.5 & 13.7 & 0.0 & 0.0 \\
    & & D3Feat & --  & 83.0 & 99.8 & 98.2 & 89.6 & 76.1 & 51.3 \\
    & & CoFiNet & --  & 82.1 & 99.9 & 99.1 & 94.1 & 78.6 & 38.7 \\
    & & GeoTrans. & --  & 42.2 & 100.0 & 93.9 & 16.6 & 0.7 & 0.0 \\
    \cmidrule{2-10}
    & \multirow{2}{*}{\shortstack[l]{\textcolor{CarnationPink}{Inference-} \\ \textcolor{CarnationPink}{Time}}} & \cellcolor{CarnationPink!10} S$^2$Reg &  \cellcolor{CarnationPink!10} --  &  \cellcolor{CarnationPink!10} 52.2 &  \cellcolor{CarnationPink!10} 100.0 &  \cellcolor{CarnationPink!10} 94.2 &  \cellcolor{CarnationPink!10} 61.3 &  \cellcolor{CarnationPink!10} 5.0 &  \cellcolor{CarnationPink!10} 1.5 \\
    & &  \cellcolor{CarnationPink!10} ML-SemReg & \cellcolor{CarnationPink!10} --  &  \cellcolor{CarnationPink!10} 73.8 &  \cellcolor{CarnationPink!10} 98.1 &  \cellcolor{CarnationPink!10} 96.3 &  \cellcolor{CarnationPink!10} 82.0 &  \cellcolor{CarnationPink!10} 68.1 &  \cellcolor{CarnationPink!10} 24.3 \\
    \cmidrule{2-10}
    & \multirow{3}{*}{N/A} & EYOC & \checkmark & 83.2 & 99.5 & 96.6& 89.1 & 78.6 & 52.3 \\
    & & RIENet & \checkmark & 50.7 & 96.3 & 72.1 & 38.2 & 24.4 & 22.6 \\
    & & INTEGER & \checkmark & 84.0 & 99.5 & 97.1 & 89.6 & 79.6 & 54.2 \\
    \cmidrule{2-10}
    & \textcolor{YellowGreen}{\textbf{Train-Only}} & \cellcolor{YellowGreen!10} \mymethod & \cellcolor{YellowGreen!10} \checkmark & \cellcolor{YellowGreen!10} \best{86.5} & \cellcolor{YellowGreen!10} \best{99.5} & \cellcolor{YellowGreen!10} \best{97.6} & \cellcolor{YellowGreen!10} \best{91.1} & \cellcolor{YellowGreen!10} \best{83.1} & \cellcolor{YellowGreen!10} \best{61.3} \\
    \midrule

    \multirow{12}{*}{\nuscenes} & \multirow{3}{*}{N/A} & FCGF & --  & 39.5 & 87.9 & 63.9 & 23.6 & 11.8 & 10.2  \\
    & & FCGF+C & --  & 59.3 & 96.2 & 85.1 & 59.6 & 35.8 & 20.0 \\
    & & Predator & --  & 51.0 & 99.7 & 72.2 & 52.8 & 16.2 & 14.3 \\
    \cmidrule{2-10}
    & \multirow{2}{*}{\shortstack[l]{\textcolor{CarnationPink}{Inference-} \\ \textcolor{CarnationPink}{Time}}} & \cellcolor{CarnationPink!10} S$^2$Reg & \cellcolor{CarnationPink!10} -- & \cellcolor{CarnationPink!10} 33.5  & \cellcolor{CarnationPink!10} 94.0 & \cellcolor{CarnationPink!10} 50.3 & \cellcolor{CarnationPink!10} 14.0 & \cellcolor{CarnationPink!10} 9.1 & \cellcolor{CarnationPink!10} 0.0\\
    & &  \cellcolor{CarnationPink!10} ML-SemReg & \cellcolor{CarnationPink!10} -- & \cellcolor{CarnationPink!10} 43.5 & \cellcolor{CarnationPink!10} 87.0 & \cellcolor{CarnationPink!10} 77.1 & \cellcolor{CarnationPink!10} 37.5 & \cellcolor{CarnationPink!10} 12.6 & \cellcolor{CarnationPink!10} 3.2\\
    \cmidrule{2-10}
    & \multirow{3}{*}{N/A} & EYOC & \checkmark & 61.7 & 96.7 & 85.6 & 61.8 & 37.5 & 26.9 \\
    & & RIENet & \checkmark & 47.1 & 96.5 & 57.9 & 36.6 & 25.8 & 18.9 \\
    & & INTEGER & \checkmark & 63.1 & 97.1 & 86.9 & 62.9 & 39.6 & 29.4 \\
    \cmidrule{2-10}
    & \textcolor{YellowGreen}{\textbf{Train-Only}} & \cellcolor{YellowGreen!10} \mymethod & \cellcolor{YellowGreen!10} \checkmark & \cellcolor{YellowGreen!10} \best{79.5} & \cellcolor{YellowGreen!10} \best{100.0} & \cellcolor{YellowGreen!10} \best{96.4} & \cellcolor{YellowGreen!10} \best{85.6} & \cellcolor{YellowGreen!10} \best{72.2} & \cellcolor{YellowGreen!10} \best{43.1} \\
    \midrule

    \multirow{5}{*}{\shortstack[c]{\kitti \\ \textdownarrow \\ \nuscenes }} &
    \multirow{3}{*}{N/A} &
    EYOC & \checkmark & 55.3 & 96.2 & 75.6 & 58.7 & 26.6 & 19.7\\
    & & RIENet & \checkmark & 46.2 & 83.3 & 73.2 & 43.5 & 19.8 & 11.1 \\
    & & INTEGER & \checkmark & 62.6 & 97.5 & 84.6 & 62.6  & 37.8 & 30.2 \\
    \cmidrule{2-10}
    & \textcolor{YellowGreen}{\textbf{Train-Only}} & \cellcolor{YellowGreen!10} \mymethod & \cellcolor{YellowGreen!10} \checkmark& \cellcolor{YellowGreen!10} \best{74.1} & \cellcolor{YellowGreen!10} \best{99.5} & \cellcolor{YellowGreen!10} \best{94.6} & \cellcolor{YellowGreen!10} \best{82.3} & \cellcolor{YellowGreen!10} \best{56.3} & \cellcolor{YellowGreen!10} \best{37.1}  \\
    \bottomrule
  \end{tabular}}
  \label{tbl:overallperf}
\end{table}

\paragraph{Overall Performance}
\cref{tbl:overallperf} shows that \mymethod substantially outperforms all unsupervised methods, achieving state-of-the-art results.
Notably, \mymethod yields pronounced improvements on \nuscenes, validating that preventing \emph{semantic collapse} is critical for sparse, low-resolution LiDAR data.
We employ cross-dataset off-the-shelf \(\mathcal{F}_\text{sem}\) (SphereFormer\cite{lai2023spherical} trained on \nuscenes\cite{caesar2020nuscenes} for \kitti and vice versa) by default to avoid information leakage and to benchmark robustness.
Consistent gains in distant scenarios (\eg, \(d\in[20,30)\), \([30,40)\)) confirm that semantic anchoring improves discriminability when geometry becomes ambiguous.
Crucially, unlike methods\cite{xu2025s2reg,yan2024ml} that require semantic models at inference, \(\mathcal{F}_{\text{sem}}\) is employed \emph{only at training time}, matching the inference cost of purely geometric methods. 

\paragraph{Generalizability}
When trained on \kitti and tested on \nuscenes, \mymethod substantially outperforms existing methods (\(+11.5\%\) mRR vs.\ INTEGER), with marked long-range improvements (\eg, \(+6.9\%\) in \(\mathrm{RR}@[40,50)\)).
This superior cross-dataset transferability, despite significant domain gaps (64- vs.\ 32-beam LiDAR and distinct scene characteristics), suggests that \mymethod learns more generalizable features, likely because stable semantic guidance prevents overfitting to domain-specific noise. 

\begin{figure}[tb]
\begin{minipage}[l]{0.49\linewidth}
  \centering
  \setlength{\tabcolsep}{0.9pt}
  \renewcommand{\arraystretch}{0.9}
  \captionof{table}{\textbf{Impact of Segmentation Models.} 
  We train on \kitti with different \(\mathcal{F}_{\text{sem}}\).
  \emph{In-domain} refers to \(\mathcal{F}_{\text{sem}}\) trained on SemanticKITTI\cite{behley2019semantickitti} , while \emph{out-of-domain} is those trained on nuScenes\cite{caesar2020nuscenes}.
  Sim-to-real \(\mathcal{F}_{\text{sem}}\) uses no real-world annotation for semantics, showing \mymethod's practicality.
  }
  \scriptsize{
    \begin{tabular}{llccccc}
    \toprule
    \multirow{2}{*}{\#} & \multirow{2}{*}{Seg. Model} & \multirow{2}{*}{Paradigm}  & \multirow{2}{*}{mRR} & \multicolumn{3}{c}{\(d\in [40, 50)\)} \\
    \cmidrule{5-7}
    & & & & RR & RRE & RTE  \\
    \midrule
    \multicolumn{7}{l}{\leadingtriangle\textit{In-domain Supervised \(\mathcal{F}_{\text{sem}}\)}} \\
    a) & SphereFormer & \tiny{Transformer} & 87.3 & 63.3 & 0.97 & 0.27 \\
    \midrule
    \multicolumn{7}{l}{\leadingtriangle\textit{Out-of-domain Supervised \(\mathcal{F}_{\text{sem}}\)}} \\
    b) & SphereFormer & \tiny{Transformer} & 86.5 & 61.3 & 1.10 & 0.28 \\
    c) & SPVNAS & \tiny{Point-Voxel} & 85.6 & 60.8 & 1.17 & 0.30 \\
    d) & MinkUNet & \tiny{Voxel-Based}  & 85.8 & 61.3 & 1.07 & 0.29 \\
    \midrule
    \multicolumn{7}{l}{\leadingtriangle\textit{Unsupervised sim-to-real adapted \(\mathcal{F}_{\text{sem}}\)}} \\
    e) & DGT-ST & -- & 85.4 & 58.3 & 1.39 & 0.44 \\
    \bottomrule
  \end{tabular}
  }
  \label{tbl:scalessm}
\end{minipage}
\hfill
\begin{minipage}[r]{0.48\linewidth}
  \centering
  \setlength{\tabcolsep}{1.4pt}
  \renewcommand{\arraystretch}{0.8}
  \captionof{table}{Ablation Study on \kitti.}
  \scriptsize{
  \begin{tabular}{lp{1cm}cccc}
    \toprule
    \multirow{2}{*}{Variants} & {tIR@1\textsuperscript{st}} & \multirow{2}{*}{mRR} & \multicolumn{3}{c}{\(d\in\left[40,50\right)\)} \\ 
    \cmidrule{4-6}
    & Epoch & & RR & RRE & RTE \\
    \midrule
    \leadingtriangle w/o \matcher  & 83.7 & 85.1 & 59.8 & 1.34 & 0.55 \\
    \rowcolor{darkgray!20}\multicolumn{6}{l}{\leadingtriangledown \textit{reselection w/ score variants:}} \\
    \rowcolor{gray!10}\quad w/ \(s_{\text{sem}}\) & 84.8 & 85.8 & 59.3 & 1.47 & 0.38 \\
    \rowcolor{gray!10}\quad w/ \(s_{\text{geo}}\) & 84.5 & 85.6 & 59.8 &  1.12 & 0.31 \\
    \midrule
    \leadingtriangle w/o \adapter & 83.6 & 84.8 & 59.3 & 1.23 & 0.51 \\
    \rowcolor{darkgray!20}\multicolumn{6}{l}{\textit{\leadingtriangledown w/ \adapter variants:}}\\
    \rowcolor{gray!10}\quad w/o \(\Phi_g\) & 84.9 & 85.7 & 59.8 & 1.30 & 0.41 \\
    \rowcolor{gray!10}\quad w/o \(\mathcal{R}\) & 84.7 & 85.8 & 60.3 & 1.41 & 0.40 \\
    \rowcolor{gray!10}\quad \(\mathcal{L}_\text{rerank}\) \textrightarrow \(\mathcal{L}_\text{reg}\)  & 83.3 & 85.0 & 59.8 & 1.25 & 0.49 \\
    \midrule
    \leadingtriangle w/o \plfilter  & 81.2 & 85.2 & 60.3 & 1.62 & 0.53  \\
    \rowcolor{darkgray!20}\multicolumn{6}{l}{\leadingtriangledown \textit{w/ \plfilter variants:}} \\
    \rowcolor{gray!10}\quad \cref{eq:wic} \textrightarrow \(\mathrm{IC}\)  & 82.0  & 85.7 & 59.8 & 1.21 & 0.44 \\
    \rowcolor{gray!10}\quad w/o \cref{eq:final_sel} & 83.2 & 85.5 & 59.3 & 1.46 & 0.52 \\
    \midrule
    \leadingtriangle w/o \dstlloss & 85.6 & 85.0 & 58.8 & 1.26 & 0.41 \\
    \leadingtriangle \(L_\text{\dstlloss}\) \textrightarrow \(L_1\) & 85.6 & 84.0 & 57.8 & 1.51 & 0.49 \\
    \midrule
    \rowcolor{blue!8} Full \mymethod & 85.6 & 86.5 & 61.3 & 1.10 & 0.28 \\
    \bottomrule
  \end{tabular}}
  \label{tbl:ablation}
\end{minipage}
\end{figure}

\subsection{Analysis}
\label{sec:analyze}

\paragraph{Impact of Segmentation Model Choice and Quality}
\cref{tbl:scalessm} shows that \mymethod maintains robust performance across diverse \(\mathcal{F}_{\text{sem}}\) choices, including voxel-based\cite{choy20194d}, point-voxel hybrid\cite{tang2020searching}, transformer-based\cite{lai2023spherical}, and \emph{unsupervised} sim-to-real domain-adapted\cite{yuan2024density} models.
Performance is stable with \textbf{\emph{both in-domain and out-of-domain semantics}}, demonstrating robustness to domain shift in semantic priors.
Crucially, the unsupervised DGT-ST's semantic quality \textbf{\emph{is significantly inferior}} to that of supervised methods (SemanticKITTI mIoU: 43.1\% vs.\ SphereFormer's 74.8\%); yet \mymethod still \textbf{\emph{mines useful semantic guidance}} from it and outperforms the previous state-of-the art~\cite{xiong2024mining}, albeit with a modest drop. This validates that even imperfect semantics suffice for \mymethod to prevent semantic collapse, demonstrating robustness and thereby opening avenues for exploring task synergy.

\begin{figure}[tb]
  \centering
  \includegraphics[width=1.0\linewidth]{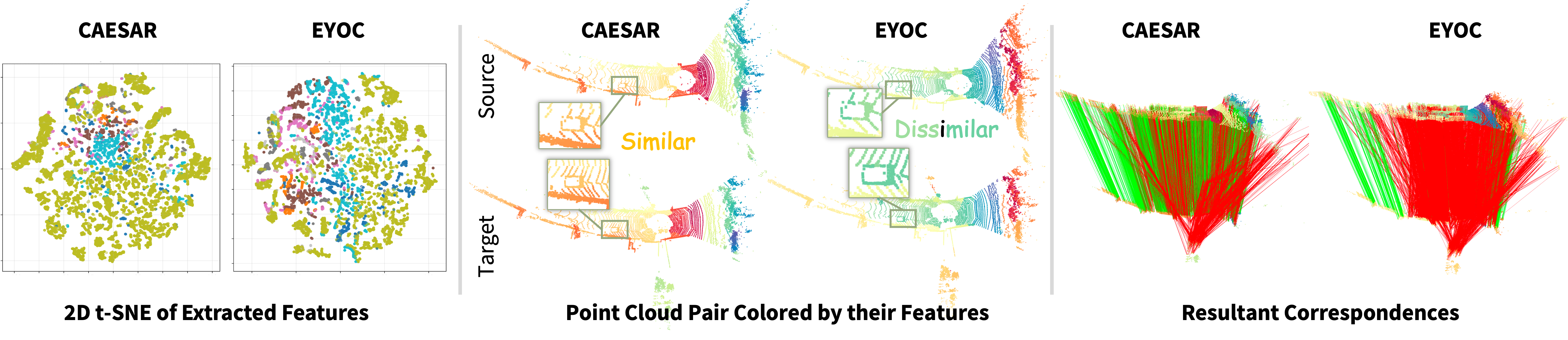}
  \caption{\textbf{(Left:) 2D t-SNE Visualization of the Source Point Cloud.} 
  Colored by Class Labels.  
  \mymethod's features are more semantically discriminative than EYOC's.
  \textbf{(Mid \& Right:) Visualization with the Corresponding Target Point Cloud.} 
  Point clouds are colored by features as done in \cite{choy2019fully}.
  \textcolor{red}{Red} lines denote outlier.
  }
  \label{fig:sem_qual}
\end{figure}

\paragraph{Qualitative Results}
We qualitatively analyze the impact of \mymethod on feature-space structure and matching quality.
In \cref{fig:sem_qual} (Left), 2D t-SNE\cite{hinton2002stochastic} visualizations colored by class labels from SemanticKITTI\cite{behley2019semantickitti} show that \mymethod's features are well separated compared to EYOC's.
Consequently, in \cref{fig:sem_qual} (Mid.), the features extracted by \mymethod are more discriminative, resulting in fewer outlier correspondences (\cref{fig:sem_qual} (Right)).
Note that EYOC produces dissimilar features for the same semantic entity (highlighted regions), whereas \mymethod's features remain consistent.
This confirms that \mymethod effectively prevents semantic collapse, producing more stable and discriminative features that improve matching quality, especially in long-range scenarios where geometric cues are weak.

\begin{figure}[tb]
\begin{minipage}[l]{0.50\linewidth}
  \centering
  \includegraphics[width=\linewidth]{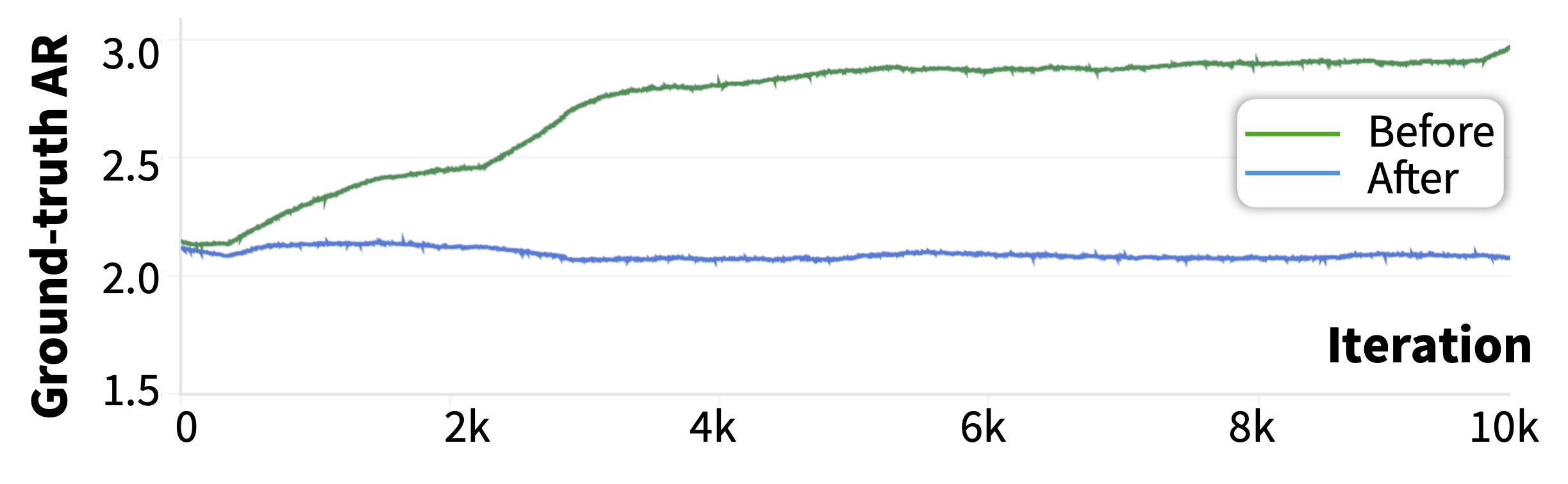}
  \captionof{figure}{\textbf{Average Rank During Training.} \adapterfull effectively lowers and stabilizes the AR of potentially correct matches during training.}
  \label{fig:ar}
\end{minipage}
\hfill
\begin{minipage}[r]{0.48\linewidth}
  \centering
  \setlength{\tabcolsep}{1.2pt}
  \renewcommand{\arraystretch}{0.9}
  \captionof{table}{\textbf{Training/Testing Time on \kitti.}  \(\Delta\%\) is the relative percentage difference with previous state-of-the-art INTEGER\cite{xiong2024mining}.}
  \scriptsize{
    \begin{tabular}{lcccc}
    \toprule
    Methods & mRR\% & \makecell{RR@\(d\in\)\\\([40,50)\)} & \makecell{Training\\(h)} & \makecell{Testing \\ (s/pair)} \\
    \midrule
    EYOC & 83.2 & 52.3 & \textbf{44.9} & \multirow{3}{*}{$\sim$0.16} \\
    \cmidrule{1-4}
    INTEGER & 84.0 & 54.2 & 76.0 & \\
    \mymethod & \textbf{86.5} & \textbf{61.3} & 62.9 & \\
    \midrule  
    \rowcolor{blue!8} \quad $\Delta\%$ & +2.9\% & +13.1\% & -17.2\% & -- \\
    \bottomrule
  \end{tabular}
  }
  \label{tbl:efficiency}
\end{minipage}
\end{figure}

\paragraph{\adapter's Effectiveness for Stability}
We evaluate \adapterfull using the \emph{average rank of ground-truth} (\(\mathrm{AR}_{\text{gt}}\)) of inliers in \cref{fig:ar}.
Its definition follows \cref{eq:ar} but is restricted to correspondences whose ground-truth inliers appear within the candidate set.
As frame distance increases during progressive training\cite{liu2024extend,xiong2024mining}, the AR before refinement (green line) rises during progressive training due to the ambiguity induced by lower overlap. However, \labeler stabilizes feature-space discriminability, as evidenced by the consistent AR after refinement (blue line).

\paragraph{Training/Testing Efficiency}
Training and testing times are shown in \cref{tbl:efficiency}.
The iterative nature and full-teacher adaptation strategy of INTEGER~\cite{xiong2024mining} make its training computationally intensive.
By alleviating the self-referential collapse via semantic anchoring, \mymethod eliminates the need for extensive iterative refinement that optimizes the full teacher model, delivering stronger results while reducing training time.
Testing efficiency matches that of the original FCGF\cite{choy2019fully} because all additional modules are discarded at inference.

\subsection{Ablation Study}

We extensively ablate \mymethod's design in \cref{tbl:ablation}.
\matcher is crucial: neither \(s_{\text{sem}}\) nor \(s_{\text{geo}}\) alone suffices, confirming the complementarity of semantic and geometric cues.
\adapterfull significantly stabilizes training; removing it degrades mRR by 1.7\%.
Among its variants, replacing \(\mathcal{L}_{\text{rerank}}\) with \(\mathcal{L}_{\text{reg}}\) causes the sharpest decline (\(-1.5\%\) mRR), confirming that the candidate-set-local surrogate of AR is critical; the global loss fails to utilize the per-set ambiguity and risks corrupting the full feature space with noise. 
\plfilterfull is essential for pseudo-label quality; its removal causes a considerable drop in tIR@1\textsuperscript{st}, triggering cascading errors in teacher-student training that are particularly severe in long-range scenarios.
Removing \dstlloss notably reduces final mRR, while replacing it with vanilla \(L_1\) loss degrades performance even further, demonstrating that na\"ive feature-space mimicry harms the student's learning.

\section{Conclusion}
\label{sec:conclusion}
In this paper, we observe that registration models spontaneously develop latent semantic structure, yet noisy pseudo-labels quickly erode it, triggering semantic collapse. \mymethod addresses this fragility by stabilizing the emergent awareness at training time through an off-the-shelf segmentation model, without adding cost at inference or requiring semantic annotations on the registration data.
For the teacher, \matcher recovers reliable correspondences and exposes local ambiguities with semantic information; \labeler then refines the feature space within each mini-batch and mines accurate pseudo-labels. For the student, \dstlloss complements standard registration supervision by encouraging holistic semantic reasoning in the learned features.
Results on \kitti and \nuscenes establish new state-of-the-art registration performance, with particularly large margins on the sparser \nuscenes benchmark, underscoring the effectiveness of semantic anchoring when geometric cues alone are insufficient.
\textbf{\emph{Limitations and future work}} are discussed in the supplementary material.

\section{Acknowledgements}
This work was supported by the National Natural Science Foundation of China (No.42571514).

%
%
\bibliographystyle{splncs04}
\bibliography{main}

\clearpage
\setcounter{page}{1}
\appendix

\begin{center}
    \vspace{0.2cm}
    \Large \textbf{Supplementary Material for \mymethodvivid}
    \vspace{0.2cm}
\end{center}
\vspace{0.5cm}

\section{Detailed Mathematical Proofs and Analysis of \texorpdfstring{$\mathcal{L}_{\text{rerank}}$}{L-rerank}}
\label{sec:proof}

In \cref{sec:reselection} of the main paper, we define the Average Rank (AR) metric (\cref{eq:ar}) as a feature-space quality diagnostic for measuring local ambiguity.
Recall that for a set of selected correspondences $\hat{\mathcal{C}}$, AR is defined as:
\begin{equation}
    \text{AR}(\hat{\mathcal{C}}) = \frac{1}{|\hat{\mathcal{C}}|} \sum_{(\mathbf{p}_i,\mathbf{q}_j) \in \hat{\mathcal{C}}} \min_k \{k \mid \mathfrak{N}^k_\mathcal{Q}(\mathbf{p}_i)= \mathbf{q}_j \}.
    \label{eq:ar_supp}
\end{equation}
A lower AR indicates that correct matches surface near the top of the candidate set $\mathcal{S}_i$ retrieved by k-NN search. 
To optimize this metric within the \labeler's \adapterfull stage, we must bridge the discrete, non-differentiable ranking operation with a continuous objective suitable for the gradient-based optimization. We propose the re-ranking loss $\mathcal{L}_{\text{rerank}}$ (\cref{eq:rerank} of the main paper) as a smooth surrogate that directly targets the AR diagnostic.
Below, we formally prove that $\mathcal{L}_{\text{rerank}}$ is an upper bound on the expected logarithmic rank of correct correspondences, and thus minimizing it effectively minimizes AR. We further establish the mathematical equivalence between our re-ranking loss and the InfoNCE\cite{oord2018representation} loss, showing why our localized formulation is particularly effective at preventing semantic collapse compared to global contrastive objectives.

\subsection{Proof for the Relaxation and Upper Bound}

\begin{proposition}
The re-ranking loss $\mathcal{L}_{\text{rerank}}$ represents a smooth, differentiable upper bound on the expectation of the logarithmic rank of the correct correspondences.
\end{proposition}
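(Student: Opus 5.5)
We argue per correspondence and then average. Fix a selected correspondence $\mathbf{c}_i=(\mathbf{p}_i,\mathbf{q}_j)\in\hat{\mathcal{C}}$ with candidate set $\mathcal{S}_i$. The first step is to rewrite the discrete rank of $\mathbf{q}_j$ within $\mathcal{S}_i$, as it appears inside \cref{eq:ar_supp}, as a sum of step functions:
\begin{equation*}
r_i \;=\; 1+\sum_{\mathbf{q}_k\in\mathcal{S}_i\setminus\{\mathbf{q}_j\}} \mathbb{1}\!\left[s(\mathbf{p}_i,\mathbf{q}_k)\ge s(\mathbf{p}_i,\mathbf{q}_j)\right].
\end{equation*}
This identity holds because the $k$-NN order in $\mathcal{S}_i$ is induced by the same similarity $s(\cdot,\cdot)$ on the refined features. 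We take the convention that ties are counted against $\mathbf{q}_j$, which only increases $r_i$ and so keeps the bound conservative. The rank is taken within the candidate set, which is what AR measures once $\mathbf{q}_j\in\mathcal{S}_i$.

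The second step is the elementary pointwise inequality $\mathbb{1}[x\ge 0]\le \exp(x/\tau)$, valid for every $x\in\mathbb{R}$ and $\tau>0$. We apply it with $x=s(\mathbf{p}_i,\mathbf{q}_k)-s(\mathbf{p}_i,\mathbf{q}_j)$ term by term. Since $\log$ is monotone, this gives
\begin{equation*}
\log r_i \;\le\; \log\Big(1+\sum_{\mathbf{q}_k\in\mathcal{S}_i\setminus\{\mathbf{q}_j\}}\exp\big((s(\mathbf{p}_i,\mathbf{q}_k)-s(\mathbf{p}_i,\mathbf{q}_j))/\tau\big)\Big)=:\ell_i.
\end{equation*}
Averaging over $\hat{\mathcal{C}}$, i.e.\ taking the expectation under the uniform empirical distribution on correspondences, gives $\frac{1}{|\hat{\mathcal{C}}|}\sum_i\log r_i\le\mathcal{L}_{\text{rerank}}$, which is the claimed bound. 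To connect this back to AR itself, we add a short remark using Jensen, $\log \text{AR}\ge \frac{1}{|\hat{\mathcal{C}}|}\sum_i \log r_i$. This direction does \emph{not} bound AR. We therefore phrase the result exactly as the statement does, in terms of the expected log-rank, and we note only that $\log r_i$ and $r_i$ are both monotone in the same pairwise violations. Driving $\mathcal{L}_{\text{rerank}}\to 0$ forces every $r_i\to 1$, and hence $\text{AR}\to 1$.

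The third step establishes smoothness and differentiability. Cosine similarity is smooth away from zero-norm features, and $\exp$ and $\log(1+\cdot)$ are smooth with a positive argument. The composition is therefore differentiable in the refined features and hence in the parameters of $\Phi_r,\Phi_g$. We will also record tightness. As $\tau\to 0^+$, $\exp(x/\tau)\to\mathbb{1}[x>0]$ for $x\neq0$, so $\ell_i\to\log r_i$ up to ties. This justifies calling $\mathcal{L}_{\text{rerank}}$ a relaxation rather than an arbitrary bound.

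The main obstacle is the first step. The rank identity requires the ordering defining $\mathfrak{N}^k_\mathcal{Q}$ to coincide with the ordering by $s$. If the $k$-NN search uses Euclidean distance, we will note that on normalized features $\|\mathbf{a}-\mathbf{b}\|^2=2-2\cos(\mathbf{a},\mathbf{b})$, so the two orderings agree. We must also handle the E/M mismatch: $\mathcal{S}_i$ is fixed in the E-step while the features move in the M-step. We resolve this by stating the bound for the rank \emph{within the fixed} $\mathcal{S}_i$ under the current features, which is the quantity AR recomputes at the next E-step.
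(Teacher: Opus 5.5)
Your main argument is the paper's proof: write the in-set rank of $\mathbf{q}_j$ as $1$ plus a sum of violation indicators, bound each indicator by $\exp(x/\tau)$, apply the monotone $\log$, and average over $\hat{\mathcal{C}}$. Several of your refinements are sound and in places sharper than the paper:
\begin{itemize}
\item Counting ties against $\mathbf{q}_j$ (using $\mathbb{I}[x\ge0]\le e^{x/\tau}$, with equality at $x=0$) bounds the $k$-NN index under any tie-breaking.
\item The identity $\|\mathbf{a}-\mathbf{b}\|^2=2-2\cos(\mathbf{a},\mathbf{b})$ reconciles Euclidean $k$-NN with the cosine score.
\item Your Jensen remark correctly observes that the bound controls the mean log-rank, not $\mathrm{AR}$ itself. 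The paper's closing sentence (``minimizing this upper bound drives down the average rank itself'') glosses over this.
\end{itemize}

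Two of your auxiliary remarks are false, although neither is needed for the proposition.

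\emph{The tightness claim.} As $\tau\to0^+$, $\exp(x/\tau)\to+\infty$ for $x>0$, not $1$. So $\ell_i\to\log r_i$ holds only when no candidate has strictly larger similarity than $\mathbf{q}_j$; as soon as one does, $\ell_i\to+\infty$. The exponential recovers the indicator only for $x\le 0$, so drop this remark.

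\emph{The limit $\mathcal{L}_{\text{rerank}}\to0$.} This is unattainable at fixed $\tau$: since $s\in[-1,1]$, every $\ell_i\ge\log\bigl(1+(K-1)e^{-2/\tau}\bigr)>0$. The right way to link the bound to $\mathrm{AR}$ uses integrality of the rank:
\begin{itemize}
\item $r_i\ge2$ forces $\ell_i\ge\log 2$, and always $r_i\le K$.
\item Hence the fraction of mis-ranked correspondences is at most $\mathcal{L}_{\text{rerank}}/\log 2$.
\item Consequently $\mathrm{AR}\le 1+(K-1)\,\mathcal{L}_{\text{rerank}}/\log 2$.
\item In particular, any $i$ with $\ell_i<\log 2$ has $r_i=1$. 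This threshold is reachable: for $K=5$, $\tau=0.7$ the floor on $\ell_i$ is about $0.21$.
\end{itemize}
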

\begin{proof}
Let $s(\mathbf{p}_i, \mathbf{q}_k)$ denote the cosine similarity between the refined features of point $\mathbf{p}_i$ and a candidate $\mathbf{q}_k \in \mathcal{S}_i$. For a given selected correspondence $\mathbf{c}_i = (\mathbf{p}_i, \mathbf{q}_j) \in \hat{\mathcal{C}}$, the discrete rank of the target $\mathbf{q}_j$ within the candidate set $\mathcal{S}_i$ can be expressed using the indicator function $\mathbb{I}[\cdot]$ as:
\begin{equation}
    \text{Rank}_i(\mathbf{q}_j) = 1 + \sum_{\mathbf{q}_k \in \mathcal{S}_i \setminus \{\mathbf{q}_j\}} \mathbb{I}[s(\mathbf{p}_i, \mathbf{q}_k) > s(\mathbf{p}_i, \mathbf{q}_j)] .
\end{equation}
so that AR (\cref{eq:ar_supp}) can equivalently be written as:
\begin{equation}
    \text{AR}(\hat{\mathcal{C}}) = \frac{1}{|\hat{\mathcal{C}}|} \sum_{\mathbf{c}_i \in \hat{\mathcal{C}}} \text{Rank}_i(\mathbf{q}_j).
\end{equation}
Intuitively, this counts how many candidates have a higher similarity to the anchor than the correct match $\mathbf{q}_j$, plus one for the rank itself.
However, $\mathbb{I}[x > 0]$ is still discontinuous with zero gradients almost everywhere.
To overcome this, we introduce a smooth, strictly positive exponential upper bound. For any scalar $x \in \mathbb{R}$ and temperature $\tau > 0$, the inequality $\mathbb{I}[x > 0] \leq \exp(x/\tau)$ holds. Substituting this relaxation yields a differentiable upper bound on the rank:
\begin{equation}
    \text{Rank}_i(\mathbf{q}_j) \leq 1 + \sum_{\mathbf{q}_k \in \mathcal{S}_i \setminus \{\mathbf{q}_j\}} \exp\left(\frac{s(\mathbf{p}_i, \mathbf{q}_k) - s(\mathbf{p}_i, \mathbf{q}_j)}{\tau}\right),
\end{equation}

To prevent exponentially large gradients (which leads to instability when semantic collapse is severe), we apply $\log(\cdot)$ to both sides:
\begin{equation}
    \log(\text{Rank}_i(\mathbf{q}_j)) \leq \log \left( 1 + \sum_{\mathbf{q}_k \in \mathcal{S}_i \setminus \{\mathbf{q}_j\}} \exp\left(\frac{s(\mathbf{p}_i, \mathbf{q}_k) - s(\mathbf{p}_i, \mathbf{q}_j)}{\tau}\right) \right).
\end{equation}
This transformation also aligns the objective with information-theoretic log-likelihoods.
Finally, taking the expectation (average) over all selected correspondences $\mathbf{c}_i \in \hat{\mathcal{C}}$ yields our proposed objective:
\begin{equation}
    \begin{aligned}
      &\qquad \frac{1}{|\hat{\mathcal{C}}|} \sum_{\mathbf{c}_i \in \hat{\mathcal{C}}} \log(\text{Rank}_i(\mathbf{q}_j)) \\ 
      &\leq \frac{1}{|\hat{\mathcal{C}}|} \sum_{\mathbf{c}_i \in \hat{\mathcal{C}}} \log \left( 1 + \sum_{\mathbf{q}_k \in \mathcal{S}_i \setminus \{\mathbf{q}_j\}} \exp \left( \frac{s(\mathbf{p}_i, \mathbf{q}_k) - s(\mathbf{p}_i, \mathbf{q}_j)}{\tau} \right) \right) \\
      &= \mathcal{L}_{\text{rerank}}.
    \end{aligned}
\end{equation}
Since $\log$ is strictly monotone, minimizing this upper bound drives down the average rank itself, directly targeting $\text{AR}(\hat{\mathcal{C}})$ (\cref{eq:ar_supp}) and pulling correct matches to the top of their candidate sets.
\end{proof}

\subsection{Discussion and Relevance to InfoNCE}

Standard contrastive learning approaches (\eg Hardest Contrastive Loss\cite{choy2019fully}, InfoNCE\cite{yew2022regtr,oord2018representation}, PointInfoNCE\cite{xie2020pointcontrast}) define negatives \emph{globally} across a batch. 
Since the supervision signals mined from an unsupervised registration framework are inherently noisy, this global push can inadvertently repel geometrically and semantically similar points, degrading the feature space and leading to semantic collapse.
Here, we show that our loss derived from the AR surrogate (\cref{eq:rerank}) is equivalent to a constrained variant of the InfoNCE loss, providing complementary theoretical grounding for (1)~its optimization effectiveness and (2)~its resilience against collapse.

\begin{theorem}
The rank-surrogate $\mathcal{L}_{\text{rerank}}$ is mathematically equivalent to an InfoNCE loss computed strictly over the localized sub-space in the feature space, defined by candidate set $\mathcal{S}_i$.
\end{theorem}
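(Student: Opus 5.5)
The plan is a direct algebraic manipulation of a single summand of $\mathcal{L}_{\text{rerank}}$ (\cref{eq:rerank}), followed by averaging over $\hat{\mathcal{C}}$. First, fix a selected correspondence $\mathbf{c}_i=(\mathbf{p}_i,\mathbf{q}_j)\in\hat{\mathcal{C}}$. Define the localized InfoNCE loss on the candidate set $\mathcal{S}_i$, with the anchor $\mathbf{p}_i$, the single positive $\mathbf{q}_j$, and the negatives restricted to the remaining $K-1$ candidates $\mathcal{S}_i\setminus\{\mathbf{q}_j\}$ rather than the whole batch:
\begin{equation*}
\ell^{\text{NCE}}_i=-\log\frac{\exp\left(s(\mathbf{p}_i,\mathbf{q}_j)/\tau\right)}{\sum_{\mathbf{q}_k\in\mathcal{S}_i}\exp\left(s(\mathbf{p}_i,\mathbf{q}_k)/\tau\right)}.
\end{equation*}
Throughout, I will identify $\mathcal{S}_i$ with its set of target points $\{\mathfrak{N}^k_\mathcal{Q}(\mathbf{p}_i)\}_{k=1}^K$, as the paper already does implicitly in \cref{eq:rerank}. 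The one point that genuinely needs care is that the positive $\mathbf{q}_j$ belongs to $\mathcal{S}_i$. This holds because $\mathbf{c}_i$ is chosen by the $\arg\max$ over $\mathcal{S}_i$ in \matcher.

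The key step is to split the denominator into the positive term and the negative terms, then divide numerator and denominator by $\exp(s(\mathbf{p}_i,\mathbf{q}_j)/\tau)$:
\begin{equation*}
\ell^{\text{NCE}}_i=\log\left(1+\sum_{\mathbf{q}_k\in\mathcal{S}_i\setminus\{\mathbf{q}_j\}}\exp\left(\frac{s(\mathbf{p}_i,\mathbf{q}_k)-s(\mathbf{p}_i,\mathbf{q}_j)}{\tau}\right)\right).
\end{equation*}
This is exactly the per-correspondence summand of $\mathcal{L}_{\text{rerank}}$. Averaging over $\hat{\mathcal{C}}$ with the same weights $1/|\hat{\mathcal{C}}|$ then gives $\mathcal{L}_{\text{rerank}}=\frac{1}{|\hat{\mathcal{C}}|}\sum_i \ell^{\text{NCE}}_i$. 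The $\mathcal{Q}$-side term is handled symmetrically.

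I do not expect a real obstacle, since the identity is elementary. The step needing precision is making ``localized sub-space'' rigorous. I would state that the normalizing partition function of the softmax ranges only over $\mathcal{S}_i$, the $K$ feature-space nearest neighbors of $\mathbf{p}_i$. This is the only difference from standard InfoNCE, where the denominator ranges over all points in $\mathcal{Q}$ or in the batch. It also follows that the gradients act only on features of points in $\mathcal{S}_i$.

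As a corollary, I would note that taking $\mathcal{S}_i=\mathcal{Q}$ (that is, $K=n$) recovers the global InfoNCE. This justifies calling $\mathcal{L}_{\text{rerank}}$ a constrained variant, and it connects the result back to the upper-bound proposition above via the InfoNCE log-likelihood interpretation.
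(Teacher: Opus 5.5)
Your proposal is correct and matches the paper's proof: define the InfoNCE loss whose denominator runs only over $\mathcal{S}_i$, split off the positive term $\exp(s(\mathbf{p}_i,\mathbf{q}_j)/\tau)$, divide through by it, and average over $\hat{\mathcal{C}}$ to recover $\mathcal{L}_{\text{rerank}}$. Your explicit check that $\mathbf{q}_j\in\mathcal{S}_i$ (since $\mathbf{c}_i$ is the $\arg\max$ over $\mathcal{S}_i$) is exactly what justifies that split, which the paper uses only implicitly.
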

\begin{proof}
Let $\mathcal{L}_{\text{NCE}}^{\text{local}}$ be an InfoNCE loss defined such that the positive pair is $(\mathbf{p}_i, \mathbf{q}_j)$ and the denominator marginalizes exclusively over the local geometric $K$-NN candidate set $\mathcal{S}_i$:
\begin{equation}
    \mathcal{L}_{\text{NCE}}^{\text{local}} = -\frac{1}{|\hat{\mathcal{C}}|}\sum_{\mathbf{c}_i\in\hat{\mathcal{C}}} \log \frac{\exp(s(\mathbf{p}_i,\mathbf{q}_j)/\tau)}{\sum_{\mathbf{q}_k \in \mathcal{S}_i} \exp(s(\mathbf{p}_i,\mathbf{q}_k)/\tau)}.
\end{equation}
Using the logarithmic quotient rule $-\log(a/b) = \log(b/a)$, we invert the fraction:
\begin{equation}
    \mathcal{L}_{\text{NCE}}^{\text{local}} = \frac{1}{|\hat{\mathcal{C}}|}\sum_{\mathbf{c}_i\in\hat{\mathcal{C}}} \log \left( \frac{\sum_{\mathbf{q}_k \in \mathcal{S}_i} \exp(s(\mathbf{p}_i,\mathbf{q}_k)/\tau)}{\exp(s(\mathbf{p}_i,\mathbf{q}_j)/\tau)} \right).
\end{equation}
We then decompose the summation in the numerator into the positive instance ($\mathbf{q}_k = \mathbf{q}_j$) and the remaining negative instances ($\mathbf{q}_k \in \mathcal{S}_i \setminus \{\mathbf{q}_j\}$):
\begin{equation}
    \mathcal{L}_{\text{NCE}}^{\text{local}} = \frac{1}{|\hat{\mathcal{C}}|}\sum_{\mathbf{c}_i\in\hat{\mathcal{C}}} \log \left( \frac{\exp(s(\mathbf{p}_i,\mathbf{q}_j)/\tau) + \sum_{\mathbf{q}_k \in \mathcal{S}_i \setminus \{\mathbf{q}_j\}} \exp(s(\mathbf{p}_i,\mathbf{q}_k)/\tau)}{\exp(s(\mathbf{p}_i,\mathbf{q}_j)/\tau)} \right).
\end{equation}
Dividing through by the denominator, we obtain:
\begin{equation}
    \mathcal{L}_{\text{NCE}}^{\text{local}} = \frac{1}{|\hat{\mathcal{C}}|}\sum_{\mathbf{c}_i\in\hat{\mathcal{C}}} \log \left( 1 + \sum_{\mathbf{q}_k \in \mathcal{S}_i \setminus \{\mathbf{q}_j\}} \exp\left(\frac{s(\mathbf{p}_i, \mathbf{q}_k) - s(\mathbf{p}_i, \mathbf{q}_j)}{\tau}\right) \right) \equiv \mathcal{L}_{\text{rerank}}.
\end{equation}
This completes the proof.
\end{proof}

\subsubsection{Discussion on Semantic Collapse.} This equivalence reveals a key structural advantage of our formulation. As noted above and in the main paper, standard, global contrastive losses push the anchor away from a broad, randomly sampled set of negatives across the batch. In outdoor LiDAR scenarios with high geometric ambiguity, this global push can inadvertently repel geometrically and semantically similar points, "starving" the feature space. 
In contrast, $\mathcal{L}_{\text{rerank}}$ acts as a \textit{localized hard-negative miner}. By constraining the denominator to $\mathcal{S}_i$, the loss exclusively penalizes the spurious nearest neighbors identified by \matcher. It concentrates representational capacity on resolving the local ambiguities that trigger semantic collapse, directly enabling the compact, batch-specific updates employed in \cref{alg:satr}.

\section{More Details on Our Key Insights}
\label{sec:key_insights}

\subsection{Experimental Protocol for Native Semantic Awareness (\cref{fig:motivation})}
To validate native semantic understanding in registration models, we employ a standard linear probing approach~\cite{xie2020pointcontrast}. We extract point features from FCGF models trained on KITTI under each unsupervised framework and freeze all backbone parameters, training only a single linear classifier to predict labels from SemanticKITTI~\cite{behley2019semantickitti}.
The evaluation protocol comprises: (i) standard train-validation split on SemanticKITTI sequences, (ii) 20-epoch training with SGD (lr=$1\mathrm{e}{-2}$, weight decay=$1\mathrm{e}{-4}$, batch size 16), and (iii) test-set mIoU across 19 semantic classes\cite{lai2023spherical}. The second panel of \cref{fig:motivation} assesses robustness against label corruption by injecting controlled fractions of random correspondence outliers and measuring the resulting degradation in both segmentation and registration performance.

\subsection{Statistical Validation of Correspondence Reselection (\cref{fig:topk_recall})}
We empirically justify \matcher's reselection strategy via statistical analysis on \kitti, using features from early training stages where ambiguity is most pronounced. We measure the fraction of source points whose true correspondence appears within the retrieved candidate set.
Formally, consider the overlap region $\mathcal{P}_{ov} \subset \mathcal{P}$ containing source points with valid correspondences in target $\mathcal{Q}$ under ground-truth transformation $\mathbf{T}_{gt}$ and distance threshold $\tau=0.3$m. For each $p_i \in \mathcal{P}_{ov}$, we retrieve top-$K$ feature-space neighbors $\mathfrak{N}_\mathcal{Q}^K(p_i)$ and compute:
\begin{equation}
    \rho_K = \frac{1}{|\mathcal{P}_{ov}|} \sum_{p_i \in \mathcal{P}_{ov}} \mathbb{I} \left[ \min_{q_j \in \mathfrak{N}_\mathcal{Q}^K(p_i)} \| \mathbf{T}_{gt}(p_i) - q_j \|_2 < \tau \right].
\end{equation}

The cumulative distribution analysis in \cref{fig:topk_recall} reveals a critical insight: while na\"ive 1-NN matching achieves low matching recall ($\rho_1 < 0.5$ for most pairs), expanding to 5-NN dramatically improves coverage ($\rho_5 > 0.5$ for the majority of pairs). This validates our core hypothesis that correct correspondences frequently exist within candidate sets but are obscured by feature-space noise, motivating \matcher's reselection strategy over the rejection-only designs of existing methods~\cite{liu2024extend,xiong2024mining}.

\section{Implementation Details}
\label{sec:hyperparameter}

\subsection{Implementation Details}
We implement \mymethod{} using PyTorch~\cite{paszke2019pytorch}. Training requires only a \emph{single} NVIDIA RTX 3090 (24GB). Our implementation leverages MinkowskiEngine~\cite{choy20194d} for sparse convolutions, consistent with established benchmarks~\cite{liu2024extend,xiong2024mining}.
All semantic components in \mymethod{} operate exclusively at training time. During inference, $\mathcal{F}_{\text{sem}}$, \adapterfull, and \matcher{} are entirely discarded, leaving only the trained FCGF student. Inference thus runs at the same 0.16\,s/pair cost as vanilla FCGF~\cite{choy2019fully}.

\subsection{Optimization and Hyperparameters}
Optimization strategies follow each model's original configuration to ensure fair comparison, \eg, SGD for FCGF~\cite{choy2019fully} with momentum 0.9 and weight decay $1\mathrm{e}{-4}$. The per-batch \adapter learning rate of 0.1 with OneCycleLR scheduling enables rapid per-batch convergence. We train for 210 epochs on KITTI and 240 epochs on nuScenes. The batch size is 6 and 8 for \kitti and \nuscenes, respectively.
Key hyperparameter choices include: a candidate set size of $K=5$ in \matcher, providing an optimal balance between computational cost and correspondence quality; geometric saliency scores computed using a neighborhood of $k=48$, capturing sufficient local structure; and an \adapterfull iteration limit of 5, ensuring convergence while preventing overfitting. The exploration weight $\gamma=0.001$ encourages diversity without destabilizing training, while temperature $\tau=0.7$ in re-ranking provides appropriate softmax sharpening. To determine whether AR (\cref{eq:ar}) has stabilized in the EM-style optimization, we use a difference threshold of 0.005. The EMA decay rate follows existing efforts\cite{liu2024extend,xiong2024mining}.

\section{Architecture Details}
\label{sec:architecture}

\begin{table}[tb]
  \centering
  \setlength{\tabcolsep}{4pt}
  \caption{Parameter Comparison between \adapter{} and Full FCGF Teacher}
  \label{tbl:adapter_eff}
  \small{\begin{tabular}{p{3cm}lc}
    \toprule
    Model & \#Param (M) & Model Size vs. FCGF \\
    \midrule
    Full FCGF Teacher & 8.76 & 100\% \\
    \adapter-only in \mymethod & 0.07 & 0.8\% \\
    \bottomrule
  \end{tabular}}
\end{table}

\subsection{Architectural Details for \adapterfull}
\adapterfull employs a lightweight architecture for efficient per-batch optimization. The semantic residual extractor $\Phi_g$ leverages FCGF's ResBlock~\cite{choy2019fully} for aggregating semantic guidance, while the point-wise refiner $\Phi_r$ uses a compact two-layer ReLU-activated MLP, ensuring rapid convergence with standard SGD while preventing overfitting to noise.

\subsection{Computational Efficiency}
Teacher refinement requires balancing adaptation capability with computational overhead. Existing approaches either freeze the teacher entirely~\cite{liu2024extend}, leaving noisy features unchecked, or adapt the full teacher per batch~\cite{xiong2024mining}, incurring prohibitive cost. Our lightweight module leverages semantic--geometric interactions while preserving computational tractability.
\cref{tbl:adapter_eff} quantifies the efficiency: \adapterfull requires only 0.07M trainable parameters per batch, a $125\times$ reduction compared to full teacher adaptation (8.76M for FCGF). The two-level stop-gradient design (\cref{eq:recon_loss,eq:low} in the main paper) further improves training stability by letting each component specialize without conflicting gradients.

\section{Dataset and Evaluation Details}
\label{sec:dataset}

\subsection{Dataset}
Our experiments use two standard large-scale outdoor datasets widely adopted for point cloud registration~\cite{lu2021hregnet,liu2023regformer,zeng2025unlocking}, especially in the unsupervised setting\cite{liu2024extend,xiong2024mining}.

\begin{itemize}
    \item \textbf{KITTI Odometry}~\cite{geiger2012we} provides high-resolution scans (64-beam Velodyne HDL-64E) across 11 sequential trajectories covering diverse driving environments.
    Following the standard protocol~\cite{lu2021hregnet,qin2022geometric,zeng2025unlocking,liu2023regformer,huang2021predator,choy2019fully,bai2020d3feat}, we use sequences 00--05 for training, 06--07 for validation, and 08--10 for testing.
    For semantic evaluation, we leverage labels from SemanticKITTI\cite{behley2019semantickitti}, and the annotations are typically mapped into 19 categories \cite{lai2023spherical,tang2020searching,choy20194d}.
    
    \item \textbf{nuScenes}~\cite{caesar2020nuscenes} contains 1000 scenes, with 850 used for training/validation and 150 for testing, following existing work~\cite{lu2021hregnet,huang2021predator,liu2023regformer,zeng2025unlocking}.
    It provides point clouds from a 32-beam Velodyne sensor and annotations for 16 distinct object categories~\cite{lai2023spherical,tang2020searching,choy20194d}. The reduced LiDAR resolution combined with complex urban scenes creates substantial geometric ambiguity that stress-tests registration methods.
\end{itemize}

\section{Limitations and Future Directions}
\label{sec:limitations}
While \mymethod{} eliminates inference overhead, it introduces a training-time dependency on a pretrained, off-the-shelf 3D segmentation model. We mitigate potential information leakage by using cross-dataset segmentation models by default, and demonstrate in \cref{tbl:scalessm} (main paper) that even unsupervised sim-to-real adapted semantics (DGT-ST, with lower mIoU) still yield improvements over prior state-of-the-art methods.
Several promising directions remain for future work:
\begin{itemize}[leftmargin=1.5em,itemsep=2pt,parsep=0pt]
  \item \textbf{Scaling to larger registration models.} Effectively training larger models such as Predator is left for future work.
  \item \textbf{Extending to cross-modal tasks.} The semantic collapse phenomenon may also affect cross-modal (\eg, point cloud to image) registration; investigating whether our anchoring strategy generalizes to these settings is an interesting future direction.
\end{itemize}

\end{document}